\documentclass[letterpaper]{article} 
\usepackage{aaai2026}  
\usepackage{times}  
\usepackage{helvet}  
\usepackage{courier}  
\usepackage[hyphens]{url}  
\usepackage{graphicx} 
\usepackage{natbib}  
\usepackage{caption} 
\usepackage{algorithm}
\usepackage{algorithmic}
\usepackage{amsfonts}       
\usepackage{amsmath,amssymb,amsthm}
\usepackage{caption}
\usepackage{microtype}      
\newtheorem{theorem}{Theorem}
\usepackage{bm}
\RequirePackage{graphicx}
\usepackage{booktabs}
\usepackage{longtable}
\usepackage{newfloat}
\usepackage{listings}
\DeclareCaptionStyle{ruled}{labelfont=normalfont,labelsep=colon,strut=off} 
\floatstyle{ruled}
\newfloat{listing}{tb}{lst}{}
\floatname{listing}{Listing}
 \makeatletter
 \def\set@curr@file#1{\def\@curr@file{#1}} 
 \makeatother
 \usepackage[load-configurations=version-1]{siunitx} 
 
 \newcommand{\indicator}[1]{\mathbb{I}}

\title{Imputation of Unknown Missingness in Sparse Electronic Health Records}
\author{
Jun Han\textsuperscript{\rm *}\!\!\!\!\!\! 
  \quad Josue Nassar\textsuperscript{\rm *} \!\!\!\!\!\!   
  \quad Sanjit Singh Batra\textsuperscript{\rm *}\!\!\!\!\!\! \\
  \quad Aldo Cordova-Palomera\!\!\!\!
  \quad Vijay Nori\!\!\!\! 
  \quad Robert E. Tillman \\ 
}
\affiliations{
    \textsuperscript{\rm *} Equal Contribution\\
    Optum AI \\
    \{jun\_han, josue\_nassar, sanjit.batra, aldocp, vijay.nori, rob.tillman\}@optum.com
}

\usepackage{bibentry}

\newcommand{\noisyx}{\mathbf{\tilde{x}}}
\newcommand{\nan}{\textrm{NaN}}
\newcommand{\vx}{\mathbf{x}}

\begin{document}
\maketitle
\begin{abstract}
Machine learning holds great promise for advancing the field of medicine, with electronic health records (EHRs) serving as a primary data source. 
However, EHRs are often sparse and contain missing data due to various challenges and limitations in data collection and sharing between healthcare providers. 
Existing techniques for imputing missing values predominantly focus on \emph{known unknowns}, such as missing or unavailable values of lab test results; most do not explicitly address situations where it is difficult to distinguish what is missing. 
For instance, a missing diagnosis code in an EHR could signify either that the patient has not been diagnosed with the condition or that a diagnosis was made, but not shared by a provider. 
Such situations fall into the paradigm of \emph{unknown unknowns}.
To address this challenge, we develop a general purpose algorithm for denoising data to recover unknown missing values in binary EHRs.
We design a transformer-based denoising neural network where the output is thresholded adaptively to recover values in cases where we predict data are missing. 
Our results demonstrate improved accuracy in denoising medical codes within a real EHR dataset compared to existing imputation approaches and leads to increased performance on downstream tasks using the denoised data. In particular, when applying our method to a real world application, predicting hospital readmission from EHRs, our method achieves statistically significant improvement over all existing baselines.
\end{abstract}


\section{Introduction}
Previous work has demonstrated the potential of applying machine learning to the medical domain for tasks such as 
disease prediction, clinical trial design, and health economics and outcome research~\citep{deng2021deep,liu2021machine,padula2022machine}.
A standard data source for ML in the medical domain is electronic health records (EHRs).
Additionally, many existing disease prediction models primarily utilize tabular formats, 
often transforming longitudinal EHR data into binary or categorical forms, 
rather than employing time-series forecasting methods~\citep{tabular_1, tabular_2, tabular_3, tabular_4}.

In practice however, EHRs, upon being binarized, are prone to having missing values due to a variety of reasons such as recording issues~\citep{goldstein2017opportunities} and under-reporting~\citep{chen2023missing}.
Moreover, the lack of data sharing between providers and the heterogeneity of EHR coding across institutions~\citep{hur2022unifying} amplifies this problem as EHR records for the same patient can vary across providers and institutions.
Training on EHR data in the presence of missingness can negatively impact model performance at test time.
Therefore, to ensure the efficacy of trained models, it is beneficial to pre-process the data by replacing missing values.

The \textit{de facto} approach for replacing missing values in data is via imputation~\citep{rubin1978multiple, gelmanbda04, mazumder2010spectral}, where missing values are replaced with imputed values computed using available data. 
Generally, imputation techniques are designed for the \emph{known} unknown scenario where it is known which fields are missing~(Figure~\ref{fig:example_figure}A).
For instance, a lab test was ordered by the physician, but the value was not recorded or made available in the EHR.
In practice, missing fields are commonly encoded with a special symbol to indicate that the value is missing, e.g.,~$\nan$~\citep{li2021imputation} or $-1$ for binary data that is otherwise $0$ or $1$.

In contrast, clinical codes in EHRs encoded as tabular data are typically represented as $1$ in the presence of a diagnosis code and represented as $0$ in the absence of a diagnosis code.
This binary representation can be ambiguous as a $0$ corresponding to a specific diagnosis code may indicate that the patient does not have the diagnosis \textbf{or} they do, but it is missing.
For example, suppose that the ICD code corresponding to diabetes is represented as $0$ in a tabular encoding of a patient's EHR.
Without additional supporting evidence, this could mean either: 1) that the patient \textbf{has not} been diagnosed with diabetes, i.e., a negative diagnosis or 2) that the patient \textbf{was} diagnosed with diabetes by a physician, but this diagnosis was not recorded by the physician or made available in the EHR.
Thus, unlike the known unknowns paradigm, it is unclear when data are missing. We therefore refer to this scenario as \textit{unknown unknowns}~(Figure~\ref{fig:example_figure}B).
This ambiguity makes the problem of imputation significantly more challenging, as it necessitates determining whether an absent code should be treated as missing or not, significantly increasing the complexity of the solution space~(Figure~\ref{fig:example_figure}B).
Moreover, recent works have shown that the missingness of negative diagnosis is a much more common issue in EHRs compared to the missingess of positive diagnosis codes~\citep{wu2023collecting,chen2023missing}.
While one could treat all codes of value $0$ as missing and apply traditional imputation approaches, the sparsity of EHRs prevents such a strategy from being a viable solution.

In this work, we propose an approach, Denoise2Impute, based on denoising, that allows for better recovery of unknown unknowns compared to SOTA imputation-based approaches.
Denoise2Impute, is a denoising neural network parameterized by a SetTransformer~\citep{lee2019set} that allows the model to learn relationships between different ICD codes, e.g., a positive diabetes diagnosis is correlated with high glucose.
While Denoise2Impute performs well, to increase its ability to distinguish whether a $0$ corresponds to a negative diagnosis or a missing diagnosis code, we systematically formulate this challenge of unknown unknowns in EHRs as a denoising problem and theoretically derive the optimal denoising function.
Inspired by the theory, we introduce an extension of Denoise2Impute, which we denote as Denoise2Impute-T, where data-dependent thresholds are computed to help stabilize the output of the network.
To assess the efficacy of our proposed approach, we compute the accuracy of the resulting denoising as well as perform comprehensive evaluations on a variety of downstream tasks using the denoised data. 
We demonstrate that our proposed approach outperforms baselines by $\sim2$\% AUPRC in denoising clinical codes within a large EHR dataset and the denoised EHRs outperform baseline approaches on a variety of clinical prediction tasks by $\sim0.5$\%.
\begin{figure*}[h!]
    \centering
    \includegraphics[width=0.7\textwidth]{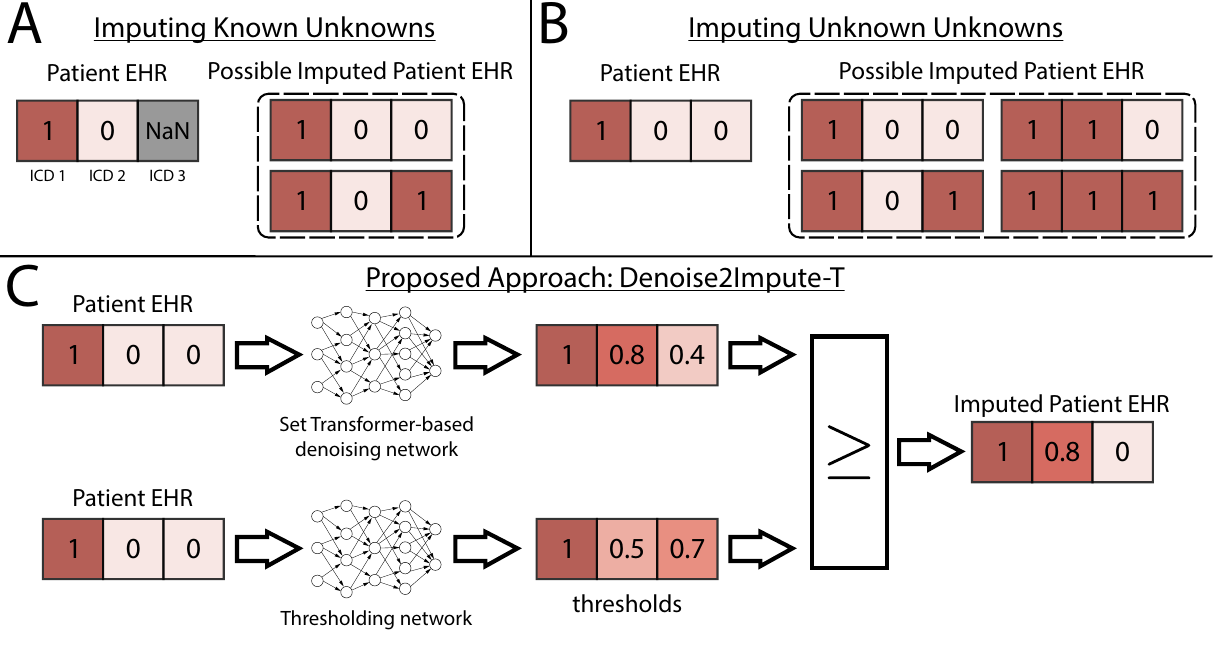}
    \caption{
        \textbf{A)} An example of imputing known unknowns.
        In the patient EHR, there is $\nan$ in ICD 3.
        Thus, there are only two possible choices for ICD 3: 0 or 1.
        \textbf{B)} An example of imputing unknown unknowns.
        In the patient EHR, there is a 0 in ICD 2 and ICD 3.
        Due to reporting issues, it is unclear whether a 0 represents a negative diagnosis (0) or a missing diagnosis (1).
        Due to this ambiguity, there are 4 possibilities after imputing: i) ICD 2 and 3 are both negative diagnoses, thus they both are 0.
        ii) ICD 2 was missing, thus it is 1. ICD 3 was a negative diagnosis, thus it is a 0.
        iii) ICD 2 was a negative diagnosis, thus it is a 0. ICD 3 was missing, thus it is 1.
        iv) Both ICD 2 and 3 are missing, thus they both are 1.
        \textbf{C)} Our proposed approach, Denoise2Impute-T, comprises 2 components.
        A patient EHR is input into a Set Transformer based denoiser that outputs a denoised patient EHR.
        The EHR is also input into a Thresholding network that outputs thresholds for each ICD code.
        A \textit{greater than or equal to} element-wise comparison is performed between the denoised EHR and the thresholds, that leads to the final imputed EHR.
        }
        \label{fig:example_figure}
\end{figure*}


\section{Related Work}
\paragraph*{Imputation and Denoising}
The process of denoising data sources has been extensively investigated with numerous domain-specific methods proposed \citep{brown1997introduction, buades2005non, mazumder2010spectral, stekhoven2012missforest, Jarrett2022HyperImpute, you2020handling, peis2022missing}, particularly in the medical domain \citep{chowdhury2017imputation, liu2023handling}. 
When it is known which entries are corrupted or missing, the denoising problem reduces to imputation, for which various techniques have been developed. 
In particular, K-nearest neighbor methods \citep{beretta2016nearest, bai2022modified, faisal2021imputation}, deep generative modeling approaches like variational autoencoders \citep{beaulieu2017missing, ma2018eddi}, and generative adversarial networks \citep{yoon2018gain, luo2018multivariate}. 
Techniques for imputation of time series medical data have been developed using diffusion models \citep{alcaraz2022diffusion, lee2022restoration, zheng2022diffusion}, and for the imputation of medical records using graph-based methods \citep{vivar2020simultaneous, vinas2021graph}. 
These methods have been further refined by incorporating attention-based approaches, which leverage learned correlations among known features to impute missing data \citep{lewis2021accurate, yildiz2022multivariate, wang2022multivariate, festag2023medical}.
In~\citep{chen2023missing}, an imputation algorithm based on interpretable machine learning was devised to allow for performant and interpretable imputation algorithms.

\paragraph*{Unknown Unknowns}
The problem of unknown unknowns has been studied in various domains, 
where it has also been referred to as latent missingess~\citep{wu2023collecting}, under-reported variables~\citep{sechidis2017dealing} and data with positive and unlabeled data~\citep{lazkecka2021estimating}.
In~\citet{wu2023collecting}, an active learning procedure for unknown unknowns in EHRs was proposed that takes into account the cost of checking if an absent diagnosis code is missing or negative.
\citet{sechidis2017dealing} and \citet{lazkecka2021estimating}
both focus on the classification setting where the label and/or the features can have unknown unknowns.
While similar in spirit to our proposed approach, these approaches are not scalable as they rely on computing conditional probabilities which is infeasible for EHRs as they can have $\sim$1,000 diagnosis codes.
In~\citep{suleiman2020clinical}, domain knowledge from expert clinicians is used to devise a recommender-system approach for imputing unknown unknowns in EHR data; we consider this work complimentary to ours.
In~\citet{wu2016collaborative}, a recommender-based system built on denoising autoencoders was to handle the cases of unknown unknowns in user preferences; we consider this work complimentary to ours.
Another domain that deals with unknown unknowns is genomics,
One class of approaches utilizes multiple noisy instances of a DNA sequence to build a denoised consensus sequence~\citep{edgar2015error, wenger2019accurate}. 
Another class of denoising approaches for DNA sequences utilizes additional covariates to perform denoising~\citep{wang2020nanoreviser}.
In this paper, however, we focus on denoising when there is a single noisy data point where it is not known \textit{a priori} which fields of the data point are corrupted or missing.
\section{Denoise2Impute: Denoising Unknown Unknowns in EHRs}
\label{sec:imputehr}
{\bf Background and Notation} We begin by introducing notation and reviewing the goal of denoising.
Let $\vx$ denote noiseless data, where $\vx \sim p(\vx), \vx \in \mathcal{X}$. 
Let $\noisyx \sim p(\noisyx \mid \vx)$, $\noisyx \in \tilde{\mathcal{X}}$ denote the corresponding noisy data where $p(\noisyx \mid \vx)$ is the noise distribution that describes the corruption process, e.g., $p(\noisyx \mid \vx) = \mathcal{N}(\noisyx \mid \vx, \sigma^2 I)$.
The goal of denoising is to learn a function, $g_{\bm{\theta}}: \tilde{\mathcal{X}} \rightarrow \mathcal{X}$, with parameters $\bm{\theta}$, that can recover $\vx$ from $\noisyx$, i.e., $\hat{\vx}_{\bm{\theta}} = g_{\bm{\theta}}(\noisyx)$.
Given a dataset of noisy and noiseless pairs, $\mathcal{D} = \{(\noisyx_i, \vx_i) \}_{i=1}^N$, we train $g_{\bm{\theta}}$ by minimizing a loss function that is appropriate for the data modality.

EHRs are commonly represented as binary vectors, $ \noisyx = [\tilde{x}_1, \ldots, \tilde{x}_d, \ldots, \tilde{x}_T]$, and $ \vx = [x_1, \ldots, x_d, \ldots, x_T]$, where $\tilde{x}_d, x_d \in \{0, 1\}$.
Given an ICD code from the noisy patient EHR, $\tilde{x}_d$, there are two possible scenarios when performing imputation of unknown unknowns.
If $\tilde{x}_d = 1$, then we preserve it as we assume that there is no ambiguity for positive diagnosis.
If $\tilde{x}_d=0$, then denoising is done to infer whether the 0 is a negative diagnosis or a missing positive diagnosis.

When a clean EHR dataset is available, we propose to train a transformer-based denoising neural network, $g_{\bm{\theta}}$, taking noisy EHR of a patient as input during training, which can be obtained by masking the clean data, and predicting the complete EHR. The transformer-based denoising neural network learns the correlation between different dimensions of EHRs in order to predict the true missing values in EHRs; we call our proposed approach Denoise2Impute.

We begin by discussing the motivation for the choice of the Set Transformer for parametrizing $g_{\bm{\theta}}$.
As mentioned previously, a 0 for an ICD code in the noisy EHR could mean that the same ICD code in the noiseless EHR could either be a 0 or a 1.
Thus, to perform well, it is crucial for the network to learn and leverage relationships between diagnosis codes.
As an example, suppose that in a patient's EHR we see there is a 0 for the diabetes ICD code.
Patients with obesity and high glucose are also likely to have diabetes.
Thus, if a 1 is present for both the obesity and high glucose ICD codes, 
but a 0 is present for the diabetes code, then it is likely that there is a missing diabetes diagnosis.
In contrast, if the patient has a clean bill of health and there is a 0 present for the diabetes code, it is likely that there is not a missing diabetes diagnosis.

For learning this relationship, we parameterize $g_{\bm{\theta}}$ with a Set Transformer~\citep{lee2019set}, as they have been shown to be able to learn pairwise and higher-order interactions between the dimensions of an input.
Moreover, unlike standard transformers, the Set Transformer architecture is agnostic to the order of the input, alleviating the need to design a positional encoding scheme for ICD codes (which is non-trivial).

For binary data, a natural choice for the loss function is the element-wise cross entropy (CE) loss
\begin{equation}\label{eq:first_attempt}
    \ell_{CE}(g_{\bm{\theta}}(\noisyx_i), \vx_i) = \sum_{d=1}^T \ell_{ce}(g_{\bm{\theta}}(\noisyx_i)_d, x_{i, d}),
\end{equation}
where $g_{\bm{\theta}}(\noisyx_i)_d$ is the $d$-th dimension of $g_{\bm{\theta}}(\noisyx_i)$ and
\begin{equation*}
    \begin{split}
        \ell_{ce}(g_{\bm{\theta}}(\noisyx_i)_d, x_{i, d}) = &x_{i, d} \log g_{\bm{\theta}}(\noisyx_{i})_d \\
         &+ (1 - x_{i, d}) \log (1 - g_{\bm{\theta}}(\noisyx_{i})_d).
    \end{split}
\end{equation*}
We found that when using~\eqref{eq:first_attempt} for training, the network underperformed on examples where
 $\tilde{x}_{i, d}=0$ and $x_{i, d} = 1$. 
Thus, we modified the loss function to put more weight on these instances
\begin{equation*}
    \label{eq:final_loss_function}
    \begin{split}
    \tilde{\ell}_{CE}(g_{\bm{\theta}}(\noisyx_i), \vx_i; \lambda) = &\sum_{d=1}^T  \indicator_{[\tilde{x}_{i,d} = x_{i,d}]} \ell_{ce}(g_{\bm{\theta}}(\noisyx_i)_d, x_{i, d}) \\
    & + \lambda \indicator_{[\tilde{x}_{i,d} \neq x_{i,d}]} \ell_{ce}(g_{\bm{\theta}}(\noisyx_i)_d, x_{i, d}),
    \end{split}
\end{equation*}
where $\lambda$ is a hyperparameter that dictates how much emphasis we should place on training examples where $\tilde{x}_{i, d}=0$ and $x_{i, d} = 1$. 

\section{Denoise2Impute-T: A theoretically motivated extension}
While Denoise2Impute was generally effective at imputing unknown unknowns, we found instances where it struggled to discern whether a $0$ was missing or a negative diagnosis.
We hypothesized that this is in part due to the difficulty of the problem.
Classically, when performing denoising we often assume--implicitly or explicitly--that we can discriminate between noiseless data, $\vx$, and noisy data, $\noisyx$.
Intuitively, this makes sense; we only perform denoising if the data is noisy, i.e., $\noisyx \neq \vx$.
If a data point is not noisy, i.e., $\noisyx = \vx$, then no denoising is performed, i.e., $g_{\bm{\theta}}$ is the identity function.

In the \textit{unknown unknowns} scenario, it is no longer easy to discriminate between $\noisyx$ and $\vx$.
For example, in the case of EHRs it is difficult to discriminate between $\noisyx$ and $\vx$ as 1) EHRs are generally very sparse and 2) the absence of a code can either mean that the patient was not diagnosed with that disease or that the patient was diagnosed but the code was not recorded or is unavailable in the EHR.
Thus, given a sparse EHR vector it is difficult to discern whether it is noiseless or whether it should be denoised.
Intuitively, we would expect denoising in the unknown unknowns scenario to be difficult as now we must discern which points need to be denoised.
While in principle, we could denoise every 0, this may introduce artifacts into the data, especially when the data are sparse~\citep{tanner2015inappropriate} and lead to suboptimal performance.

To formalize the intuition presented above, we begin by restricting ourselves to the case where $\mathcal{X} \subseteq \mathbb{R}^T$ and $\tilde{\mathcal{X}} \subseteq \mathbb{R}^T$ and write the noise distribution, $p(\noisyx \mid \vx)$, as follows:
\begin{equation}\label{eq:new_noise_corruption}
    p( \noisyx \mid \vx) = \beta \delta(\noisyx = \vx) + (1 - \beta)  q(\noisyx \mid \vx),
\end{equation}
where $\delta(\cdot)$ is the Dirac delta function and $\beta \in [0, 1]$.
With~\eqref{eq:new_noise_corruption} we attempt to model the unknown unknowns setting, where with probability $\beta$ no noise is added, i.e., $\noisyx = \vx$, and with probability $1 - \beta$ noise is introduced via $q(\noisyx \mid \vx)$.
\eqref{eq:new_noise_corruption} demonstrates the difficulty of the problem: if $\beta$ is large then with high probability, $\noisyx = \vx$, making it impossible to discriminate between the two.
Moreover, the form of $q(\noisyx \mid \vx)$ can also make the problem difficult as noise can be injected that can make $\noisyx$ likely under the data distribution, $p(\vx)$.

Given~\eqref{eq:new_noise_corruption}, we seek to derive the optimal denoising function, $f_\star: \tilde{\mathcal{X}} \rightarrow \mathcal{X}$.
Specifically, we seek the optimal denoising function
that minimizes the expected mean squared error (MSE)
\begin{equation*}
    \label{eq:variational_denoising}
    f_\star(\cdot) = \underset{f(\cdot)}{\textrm{argmin}} \quad \mathbb{E}_{p(\vx) p(\noisyx \mid \vx)}\left[ (f(\noisyx) - \vx)^\top (f(\noisyx) - \vx) \right].
\end{equation*}
Using calculus of variations, we can derive the functional form of $f_\star$. 
Due to space, we state an informal theorem below, which we prove in the Appendix.

\begin{theorem}\label{thm:optimal_denoising}
    Let $p(\vx)$ be the data distribution, let $p(\noisyx \mid \vx)$ the noise distribution as defined in~\eqref{eq:new_noise_corruption} and let $q(\noisyx) = \int q(\noisyx \mid \vx) p(\vx) d\vx$ the marginal distribution of the noisy data.
    With respect to MSE, the optimal denoising function is
    \begin{equation}
        \label{eq:optimal_f_star}
        f_\star(\noisyx) = \omega(\noisyx) \noisyx + \left( 1 - \omega(\noisyx)\right) \, g_\star(\noisyx),
    \end{equation}
    where
    \begin{gather*}
        g_\star(\noisyx) \triangleq \mathbb{E}_{q(\vx \mid \noisyx )}[\vx],
        \omega(\noisyx) \triangleq \frac{p(\noisyx)}{p(\noisyx) + \gamma \tilde{q}(\noisyx)},
        \gamma \triangleq \frac{1 - \beta}{\beta}.
    \end{gather*}
\end{theorem}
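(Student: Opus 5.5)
The plan is the standard regression argument: the MSE-optimal denoiser is the posterior mean, $f_\star(\noisyx) = \mathbb{E}_{p(\vx \mid \noisyx)}[\vx]$, and this posterior mean splits into a weighted combination of the two mixture components in~\eqref{eq:new_noise_corruption}.

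\textbf{Step 1: reduce to the posterior mean.} Write the objective as $\int \left( \int \| f(\noisyx) - \vx\|^2 p(\vx, \noisyx) d\vx \right) d\noisyx$. It decouples over $\noisyx$, so it suffices to minimize pointwise in $f(\noisyx)$. Equivalently, take the first variation $f \mapsto f + \epsilon h$ and set the derivative at $\epsilon=0$ to zero for all $h$. This gives
\[
f_\star(\noisyx) \int p(\vx,\noisyx)\,d\vx = \int \vx\, p(\vx,\noisyx)\,d\vx,
\]
that is, $f_\star(\noisyx) = \mathbb{E}[\vx \mid \noisyx]$. Convexity of the integrand in $f$ shows this stationary point is the minimizer.

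\textbf{Step 2: expand the joint.} Substitute $p(\vx,\noisyx) = \beta\, p(\vx)\delta(\noisyx=\vx) + (1-\beta)\, q(\noisyx\mid\vx)p(\vx)$ into both sides.

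For the numerator, the delta term integrates to $\beta\, \noisyx\, p(\noisyx)$, where $p$ is evaluated at $\noisyx$ because $\vx$ is forced to equal $\noisyx$. The second term equals $(1-\beta)\, q(\noisyx)\, \mathbb{E}_{q(\vx\mid\noisyx)}[\vx] = (1-\beta)\, q(\noisyx)\, g_\star(\noisyx)$, using the Bayes posterior $q(\vx\mid\noisyx) = q(\noisyx\mid\vx)p(\vx)/q(\noisyx)$.

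The denominator is likewise $\beta p(\noisyx) + (1-\beta) q(\noisyx)$.

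\textbf{Step 3: normalize.} Divide numerator and denominator by $\beta$ to obtain
\[
f_\star(\noisyx) = \frac{p(\noisyx)\noisyx + \gamma q(\noisyx) g_\star(\noisyx)}{p(\noisyx) + \gamma q(\noisyx)}.
\]
This is exactly $\omega(\noisyx)\noisyx + (1-\omega(\noisyx))g_\star(\noisyx)$ with $\gamma = (1-\beta)/\beta$. Here the statement's $\tilde q$ is read as the noisy marginal $q$ just defined.

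\textbf{Main obstacle.} The delicate point is the Dirac delta. Its term and the $q$ term must be densities with respect to a common base measure, so that $p(\noisyx)$ and $q(\noisyx)$ can be compared in $\omega$.

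In the binary EHR setting that motivates the theorem, $\mathcal{X} = \{0,1\}^T$ and everything is a probability mass function with sums in place of integrals. The argument is then rigorous verbatim: $\delta$ becomes the indicator $\mathbb{I}[\noisyx=\vx]$, and the delta integral becomes the sum $\sum_{\vx}\vx\, p(\vx)\,\mathbb{I}[\noisyx=\vx] = \noisyx\, p(\noisyx)$.

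For the continuous case, I would state the theorem informally, consistent with the paper's remark. The result should then be read with $p$ and $q$ densities with respect to a common dominating measure. The edge cases $\beta = 0$ (where $\gamma = \infty$) and $\beta = 1$ follow by continuity or direct inspection.
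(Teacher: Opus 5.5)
Your proposal is correct and is essentially the paper's argument in a different order. The paper substitutes the mixture noise into the expanded MSE and then applies the Euler--Lagrange condition, while you first derive the posterior-mean condition and then split numerator and denominator over the two mixture components. Both routes arrive at the same stationarity equation $\beta p(\noisyx) f(\noisyx) + (1-\beta) q(\noisyx) f(\noisyx) = \beta p(\noisyx)\noisyx + (1-\beta) q(\noisyx) g_\star(\noisyx)$. Your convexity remark, which justifies that the stationary point is a minimizer, and your reading of $\tilde q$ as the noisy marginal $q$ are points the paper leaves implicit.
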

From~\eqref{eq:optimal_f_star}, we see that the optimal denoising function $f_\star$ is a convex combination of the identity function and the minimum MSE, $g_\star$~\citep{casella2021statistical}, where the weight is determined by $\omega(\noisyx) \in [0, 1]$.
Specifically, $\omega(\noisyx)$ measures how likely $\noisyx$ is under the data distribution $p(\vx)$, compared to the scaled marginal distribution of the noisy data $\gamma \tilde{q}(\noisyx)$.
Denoising known unknowns corresponds to the case where it is easy to discriminate between $p(\vx)$ and $\gamma \tilde{q}(\noisyx)$, corresponding to $\omega(\noisyx) \approx 0$ and $\omega(\noisyx) \approx 1$.
Denoising unknown unknowns corresponds to the scenario where it is hard to discriminate between $p(\vx)$ and $\tilde{q}(\noisyx)$, corresponding to $p(\vx) \approx \tilde{q}(\vx)$;
this leads to $\omega(\noisyx) \approx \frac{1}{1 + \gamma}$.
Thus, $\gamma$ can be interpreted as a regularizer that allows us to choose how much weight $f_\star$ puts on the identity function or on the MMSE estimator which can lead to better behaved estimators.

Inspired by Theorem~\ref{thm:optimal_denoising}, we aim to extend Denoise2Impute.
We cannot straightforwardly use Theorem~\ref{thm:optimal_denoising} as (i) it requires knowledge of $p(\vx)$, $\beta$, and $q(\noisyx \mid \vx)$, which is usually not known and (ii) is designed for the MSE loss function, which assumes real-valued data, while we consider binary EHR data in this study.
However, we conjecture that for many popular loss functions, $\ell$, the optimal denoising function can be written as
\begin{equation}\label{eq:conjectured_optimal_denoising}
    f_\star(\noisyx) = \omega(\noisyx) \noisyx + \left( 1 - \omega(\noisyx)\right) \, g_\ell(\noisyx),
\end{equation}
where the functional form of $g_\ell: \tilde{\mathcal{X}} \rightarrow \mathcal{X}$, depends on the choice of $\ell$, i.e, if $\ell$ is MSE then $g_\ell$ becomes the MMSE.

Inspired by~\eqref{eq:conjectured_optimal_denoising}, we can extend Denoise2Impute, $g_{\bm{\theta}}$, by introducing a function that determines which codes to denoise.
Specifically, given a trained denoiser, $g_{\bm{\theta}}$, and a function, $\omega_{\bm{\phi}}$, we can combine them to create a new denoising function, $f$, which we can be expressed as
\begin{equation}\label{eq:practical}
    f(\noisyx) = \omega_{\bm{\phi}} (\noisyx) \noisyx + \left( 1 - \omega_{\bm{\phi}} (\noisyx) \right) g_{\bm{\theta}} (\noisyx).
\end{equation}
We call~\eqref{eq:practical} Denoise2Impute-T.

While there are many ways to parameterize $\omega_{\bm{\phi}}$, we found the following to work well
\begin{equation}\label{eq:threshold}
    \omega_{\bm{\phi}}(\noisyx) = \indicator_{[g_{\bm{\psi}}(\noisyx) < \bm{\phi}]},
\end{equation}
where $\indicator_{[\cdot]}$ is the indicator function and $\bm{\phi} = [\phi_1, \ldots, \phi_T]$ are learnable thresholds.
The use of $\bm{\phi}$ allows Denoise2Impute-T to adaptively choose when to use the identity function or $g_{\bm{\psi}}$ for each individual dimension of $\noisyx$.
While the output of $\omega$ in~\eqref{eq:optimal_f_star} is a scalar, we found that making it vector-valued, i.e, having a separate threshold for each dimension of $\noisyx$, led to better empirical performance.

To train $\omega_{\bm{\phi}}$ we use a continuous relaxation of the indicator function and use~\eqref{eq:final_loss_function} to compute the loss with respect to $f$~\eqref{eq:practical}
\begin{equation*}
    \begin{split}
    \tilde{\ell}_{CE}(f(\noisyx_i), \vx_i; \lambda) = &\sum_{d=1}^T  \indicator_{[x_{i,d} = \tilde{x}_{i,d}]} \ell_{ce}(f(\noisyx_i)_d, x_{i, d}) \\
    & + \lambda \indicator_{[x_{i,d} \neq \tilde{x}_{i,d}]} \ell_{ce}(f(\noisyx_i)_d, x_{i, d}),
    \end{split}
\end{equation*}
where $g_{\bm{\theta}}$ is kept fixed and only the parameters of $\omega_{\bm{\phi}}$ are updated.
More details in training can be found in Appendix~\ref{app:training_details}.
\section{Results}\label{exp_details}
We include comprehensive experiments to demonstrate the accuracy and utility of our proposed methodology.
First, we evaluate the denoising accuracy on a large EHR dataset.
Next, the denoised EHRs are used in downstream prediction tasks.
Below we provide brief details on the datasets and baselines.
More details regarding data sources, model architectures and hyperparameters can be found in Appendix~\ref{app:supp_experiments}.

\paragraph{Datasets}
Payer companies and care delivery organizations (CDOs) commonly have overlapping patient population but have unique coding practices, i.e., the same patient's EHR can differ between the payer company and the CDO.
Thus, for training the denoising model, we begin with two sources of EHR data with overlapping patient populations from a large healthcare institute, $\mathcal{D}_1 = \{\tilde{\mathbf{x}}_1, \ldots, \tilde{\mathbf{x}}_{N}\}$ and $\mathcal{D}_2 = \{\tilde{\mathbf{y}}_1, \ldots, \tilde{\mathbf{y}}_M\}$. 
Each data point is a vector of binary indicators for the presence of ICD-10 diagnosis codes in a patient's EHR, where we set the number of codes to be 993, i.e., $T=993$. We leverage the overlapping patient populations of $\mathcal{D}_1$ and $\mathcal{D}_2$ to construct a dataset $\mathcal{D}$ of size 445,345 patients. More details can be found in Appendix~\ref{app:datasets}.


In Figure~\ref{fig:dataset_details}, we provide descriptive statistics for~$\mathcal{D}_1$~and~$\mathcal{D}_2$.
Figure~\ref{fig:dataset_details}A demonstrates that a majority of the 993 diagnosis codes have a prevalence of less than $\sim3\%$ for both $\mathcal{D}_1$ and $\mathcal{D}_2$. Additionally, codes tend to have higher prevalence in $\mathcal{D}_2$ than in $\mathcal{D}_1$.
Figure~\ref{fig:dataset_details}B demonstrates the sparsity of patient EHRs, where for both $\mathcal{D}_1$ and $\mathcal{D}_2$, a majority of patient EHRs have less than 6\% of the ICD codes present. Furthermore, since each dot corresponds to a patient whose EHR is analyzed in $\mathcal{D}_1$ and $\mathcal{D}_2$, we can compare the sparsity across the two datasets for each patient. We can therefore conclude that $\mathcal{D}_1$ has a higher degree of sparsity in patient EHRs than $\mathcal{D}_2$.

While both $\mathcal{D}_1$ and $\mathcal{D}_2$ are very sparse, Figure~\ref{fig:dataset_details}C and \ref{fig:dataset_details}D demonstrate a peak in eigenvalues of the covariance matrix of $\mathcal{D}_1$ and $\mathcal{D}_2$ compared to eigenvalues of the covariance matrix of random binary matrices (with prevalence matched to $\mathcal{D}_1$ and $\mathcal{D}_2$, respectively). This suggests that $\mathcal{D}_1$ and $\mathcal{D}_2$ comprise higher-order structure across their dimensions which can then be leveraged by imputation methods to learn relationships between ICD-10 codes. Therefore, $\mathcal{D}_1$ and $\mathcal{D}_2$ provides versatile benchmarks to evaluate different methods.

In the final, we employ a real world dataset to evaluate our method on a popular healthcare prediction task from the literature, predicting hospital readmission from ICD-10 codes~\citep{kansagara2011risk, huang2019clinicalbert, mahmoudi2020use, jiang2023health}. The size of the dataset is 1,670,347 of which the size of training dataset is 1,124,550 and the size of the testing dataset is 545,797. For the hospital readmission task, the trained denoising model is used to denoise a population of 1,670,347 patients which comes from the same source as $\mathcal{D}_1$.
\begin{figure*}[ht!]
  \centering
  \includegraphics[width=0.7\textwidth]{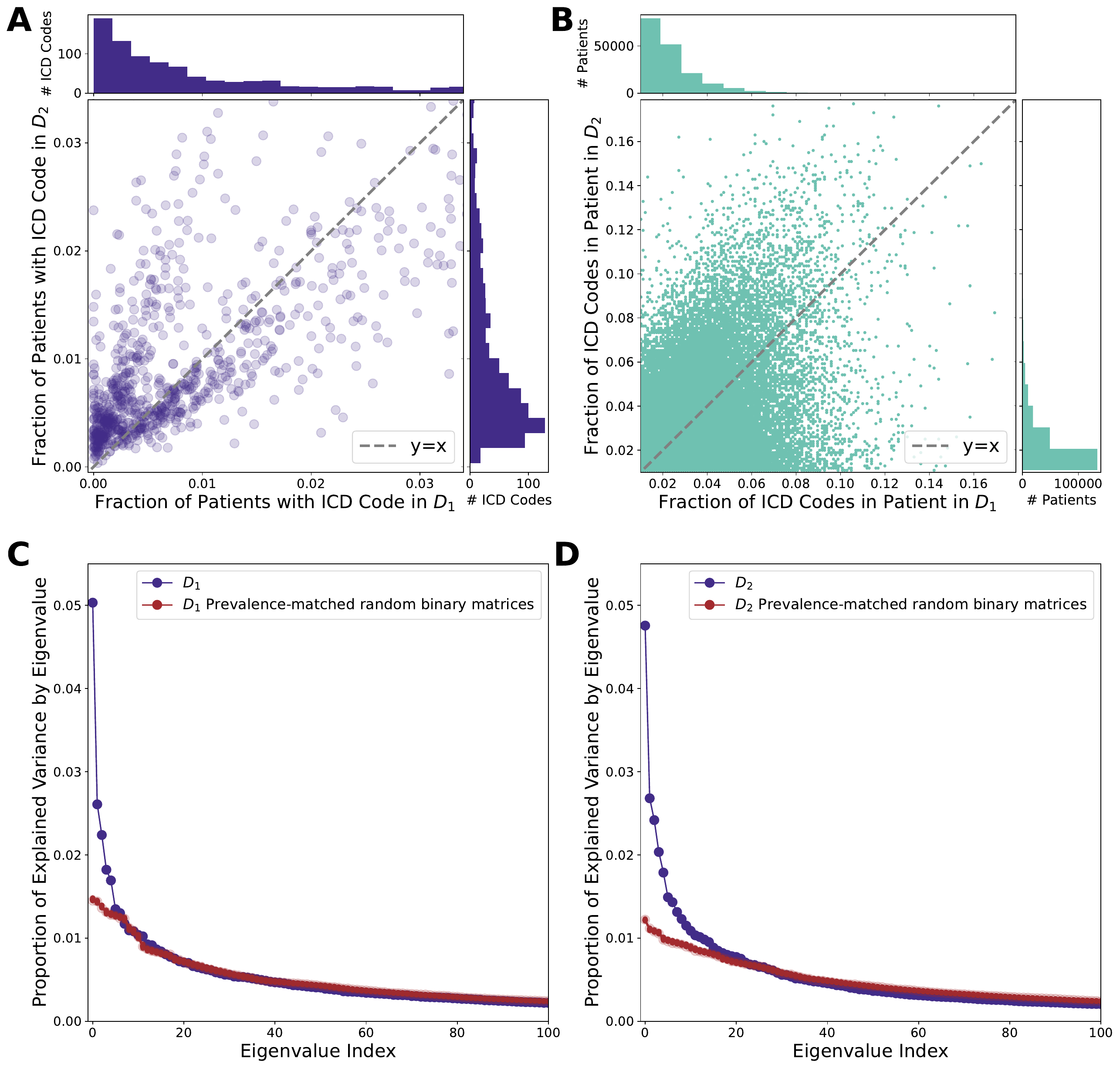}
  \caption{
    Descriptive statistics for both datasets used in this paper.
    \textbf{A)} Comparison of the prevalence of ICD codes in $\mathcal{D}_1$ vs $\mathcal{D}_2$.
    \textbf{B)} Comparison of the sparsity of patient EHRs in $\mathcal{D}_1$ vs $\mathcal{D}_2$.
    \textbf{C)} The eigenvalue spectra of the covariance matrix of $\mathcal{D}_1$ (blue) with respect to 100 prevalence-matched random binary matrices (red).
    \textbf{D)} The eigenvalue spectra of the covariance matrix of $\mathcal{D}_2$ (blue) with respect to 100 prevalence-matched random binary matrices (red).
  }
  \label{fig:dataset_details}
\end{figure*}

\paragraph{Baselines} 
We compare our two proposed approaches, Denoise2Impute and Denoise2Impute-T,
against popular imputation methods
wherein, for the imputation approaches all 0's are treated as missing values.
Specifically, we compare against the following imputation approaches: 1) replacing all 0's with the \textbf{prevalence} computed by calculating mean for each dimension in $\mathcal{D}_1$,
2) k-nearest neighbor \textbf{(k-NN)} (with k = 5) and 3) \textbf{softImpute}~\citep{mazumder2010spectral} using the implementation from~\citet{fancyimpute}.
We also compare the performance of Denoise2Impute but parametrized $g_{\bm{\theta}}$ with a feedforward neural network as opposed to a Set Transformer; this is denoted as \textbf{MLP}.
We also compare against a denoising auto-encoder \textbf{(DAE)} and the collaborative denoising auto-encoders \textbf{(CDAE)} from~\citet{wu2016collaborative}.
We note that we attempted to compare against the recommender-system based approach of~\citep{suleiman2020clinical}, but their code was not publicly available.
Training and architecture details can be found in Appendix~\ref{app:training_details}.


\begin{table*}[ht!]
  \centering
  \begin{tabular}{ |c | c | c | c |c | c | c | c |}
    \hline
    Methods  & $\mathcal{D}_1$ & Prevalence & MLP & DAE & CDAE & D2Impute & D2Impute-T \\ \hline
    Avg. AUPRC & 51.86 & 52.37 & 61.91 & 62.05 & 62.17 & 64.90 & {\bf 65.53}  \\ \hline
    \end{tabular}  
\captionof{table}{Average of dimension-wise AUPRC (higher is better) on the test set where noisy vectors were obtained from $\mathcal{D}_1$. D2Impute is short for Denoise2Impute.}
\label{tab:test_set_results}
\end{table*}

\paragraph{Benchmarking Imputation Performance} To investigate the imputation
performance of all the methods, we compute the AUPRC between the imputed values
 and $\mathcal{D}$ for each dimension. 
Table~\ref{tab:test_set_results} summarizes the AUPRC averaged across all 993 dimensions. 
We observe that both Denoise2Impute and Denoise2Impute-T outperform the other methods, with Denoise2Impute-T achieving the
the highest AUPRC and also improves upon using the original noisy data, $\mathcal{D}_1$.

\begin{figure*}[ht!]
  \begin{center}
  \begin{tabular}{ccc}
  \includegraphics[width=0.28\textwidth]{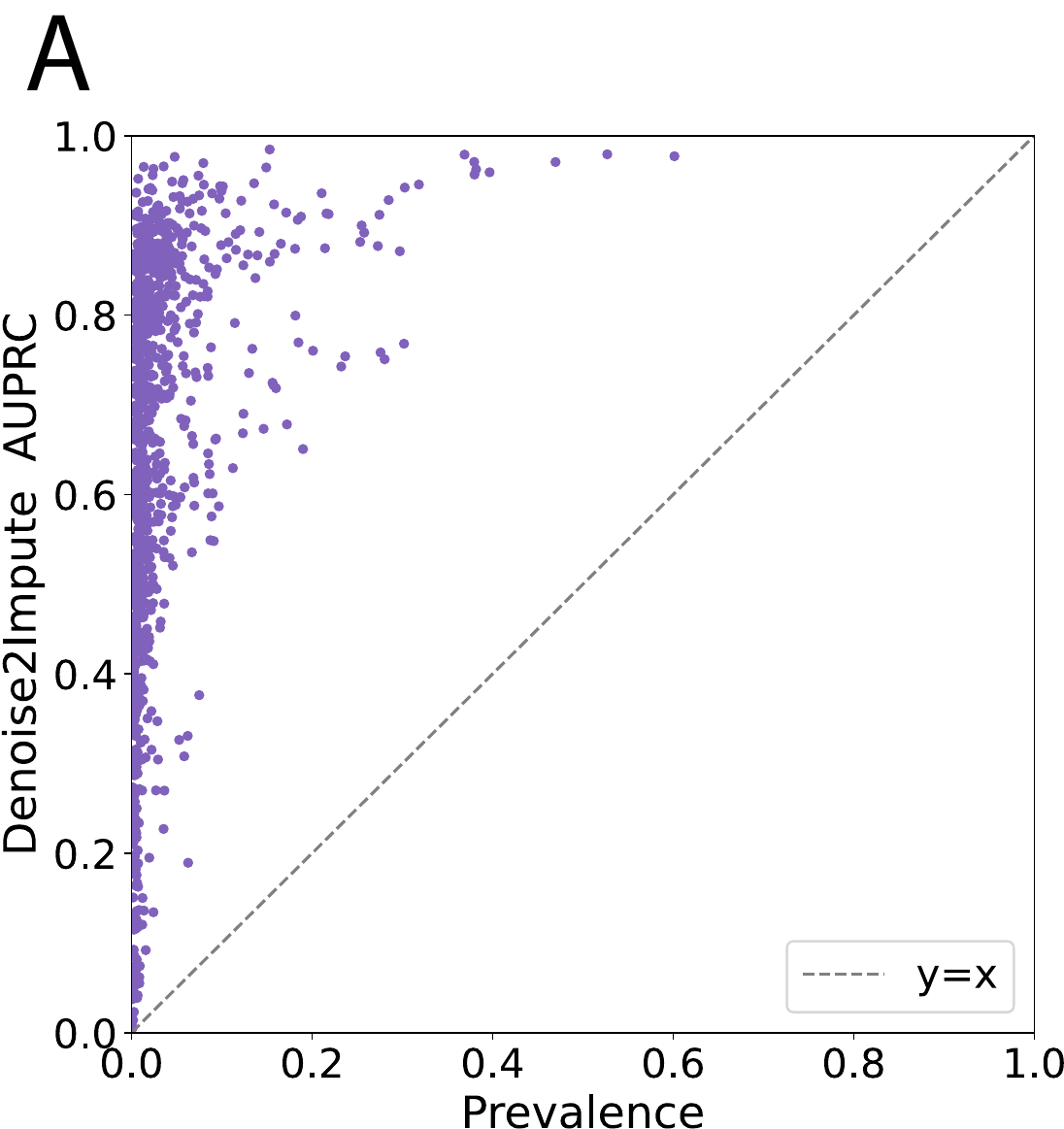} &
  \hspace{-0.5cm}
  \includegraphics[width=0.28\textwidth]{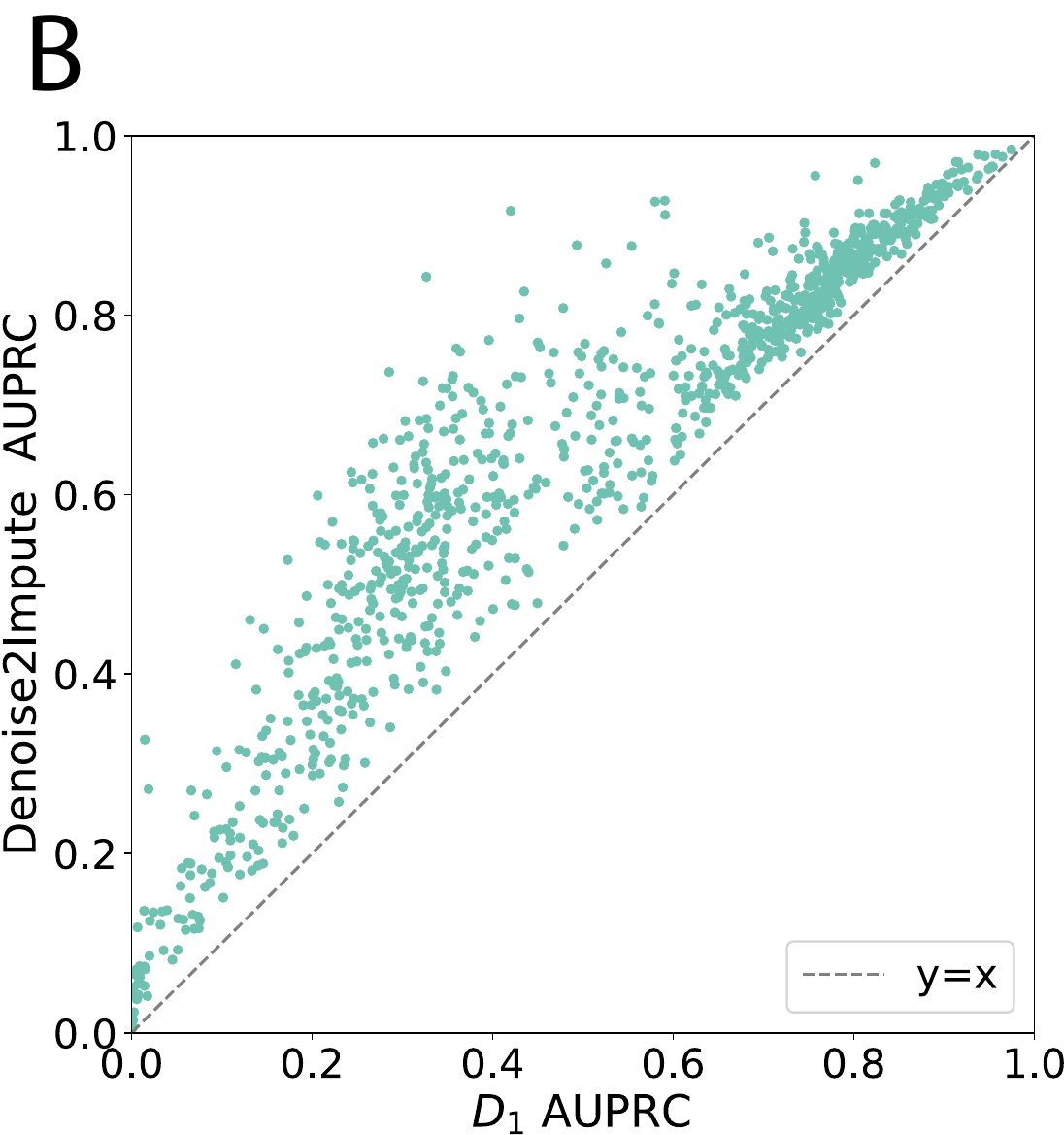} &
  \hspace{-0.5cm}
  \includegraphics[width=0.28\textwidth]{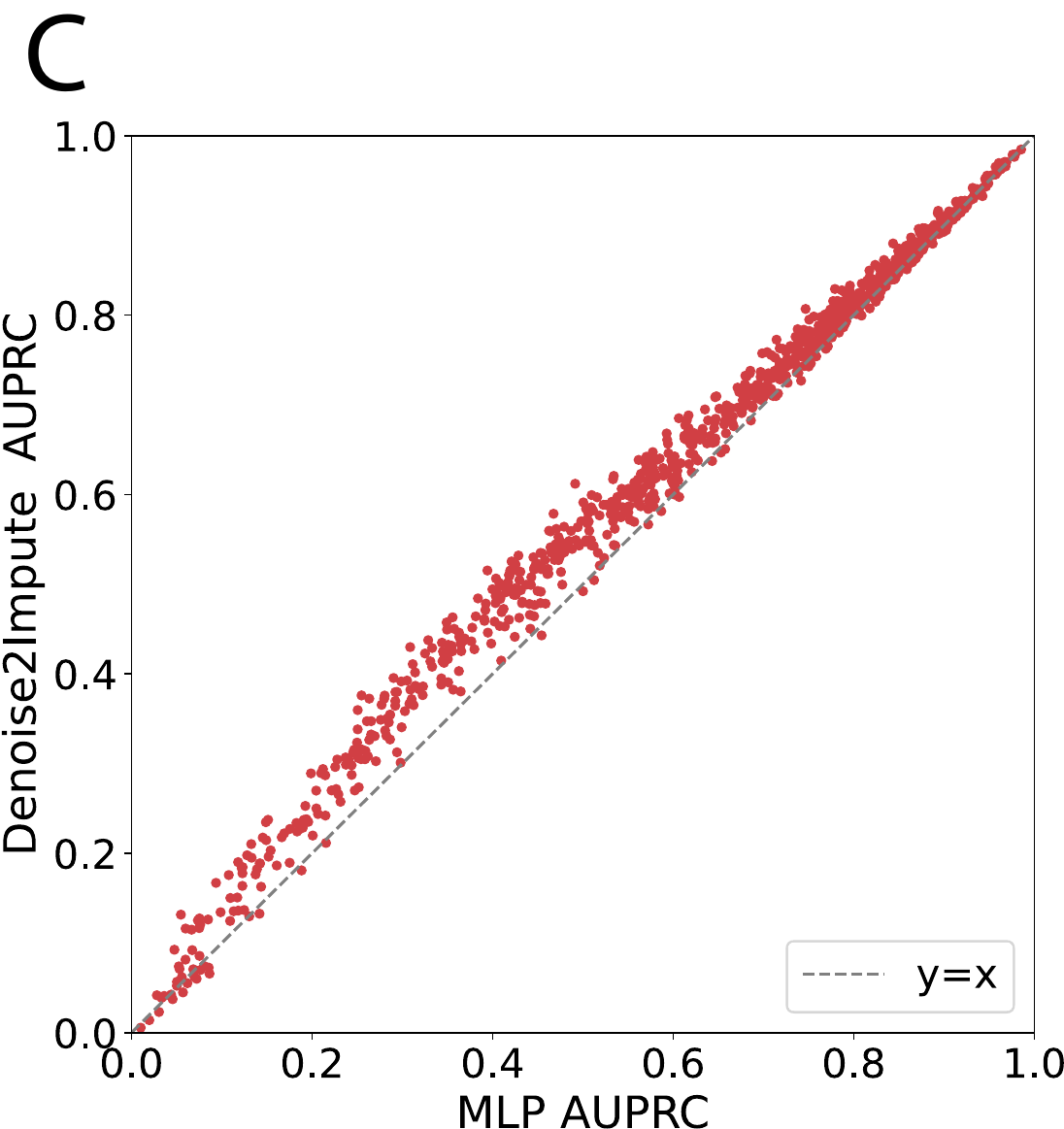} \\
  \end{tabular}
  \caption{Dimension-wise AUPRC for $T=993$ computed on the test data. \textbf{A)} Comparison between AUPRC of the Denoise2Impute and the column mean (\textit{prevalence}) for each dimension of the noisy data $\mathcal{D}_1$. \textbf{B)} Comparison between AUPRC of the Denoise2Impute and that of the $\mathcal{D}_1$. \textbf{C)} Comparison between AUPRC of the Denoise2Impute and AUPRC of the MLP.}
  \label{fig2}
  \end{center}
\end{figure*}

In Figure~\ref{fig2}, we investigate the performance of Denoise2Impute for denoising individual ICD codes.
Specifically, we compare the dimension-wise AUPRC of the Denoise2Impute method against the prevalence
 ~(Figure~\ref{fig2}A), not applying imputation and using $\mathcal{D}_1$ as is (Figure~\ref{fig2}B), and an MLP-parameterized Denoise2Impute~(Figure~\ref{fig2}C).
 In Figure~\ref{fig2}A, we observe that the AUPRC of the denoised data using the Denoise2Impute is higher than the column mean (also referred to as \textit{prevalence}) for each of the 993 dimensions. 
In Figure~\ref{fig2}B, we observe that Denoise2Impute achieves higher AUPRC across all 993 dimensions than using the noisy data $\mathcal{D}_1$ itself as the denoised data.
In Figure~\ref{fig2}C, we observe that using a Set Transformer to parametrize $g_\theta$ in Denoise2Impute achieves higher AUPRC than using a feedforward neural network (referred to as MLP) in 866 ($\sim87\%$) out of the 993 dimensions. 

Furthermore, to investigate how the noise distribution affects the performance of models, we created a synthetic noisy dataset from $\mathcal{D}$, where the noise was distributed such that the prevalence of the synthetic noisy dataset matches that of $\mathcal{D}_1$. The result on synthetic data in Figure~{\ref{fig:synthetic_example}} in Appendix~\ref{app:synthetic} shows consistent results with that of Figure~\ref{fig2}. Together, both real and synthetic data experiments provide empirical support for the use of a Set Transformer within Denoise2Impute.

\begin{figure*}[ht!]
  \centering
    \begin{tabular}{ccc}
    \includegraphics[width=0.333\textwidth]{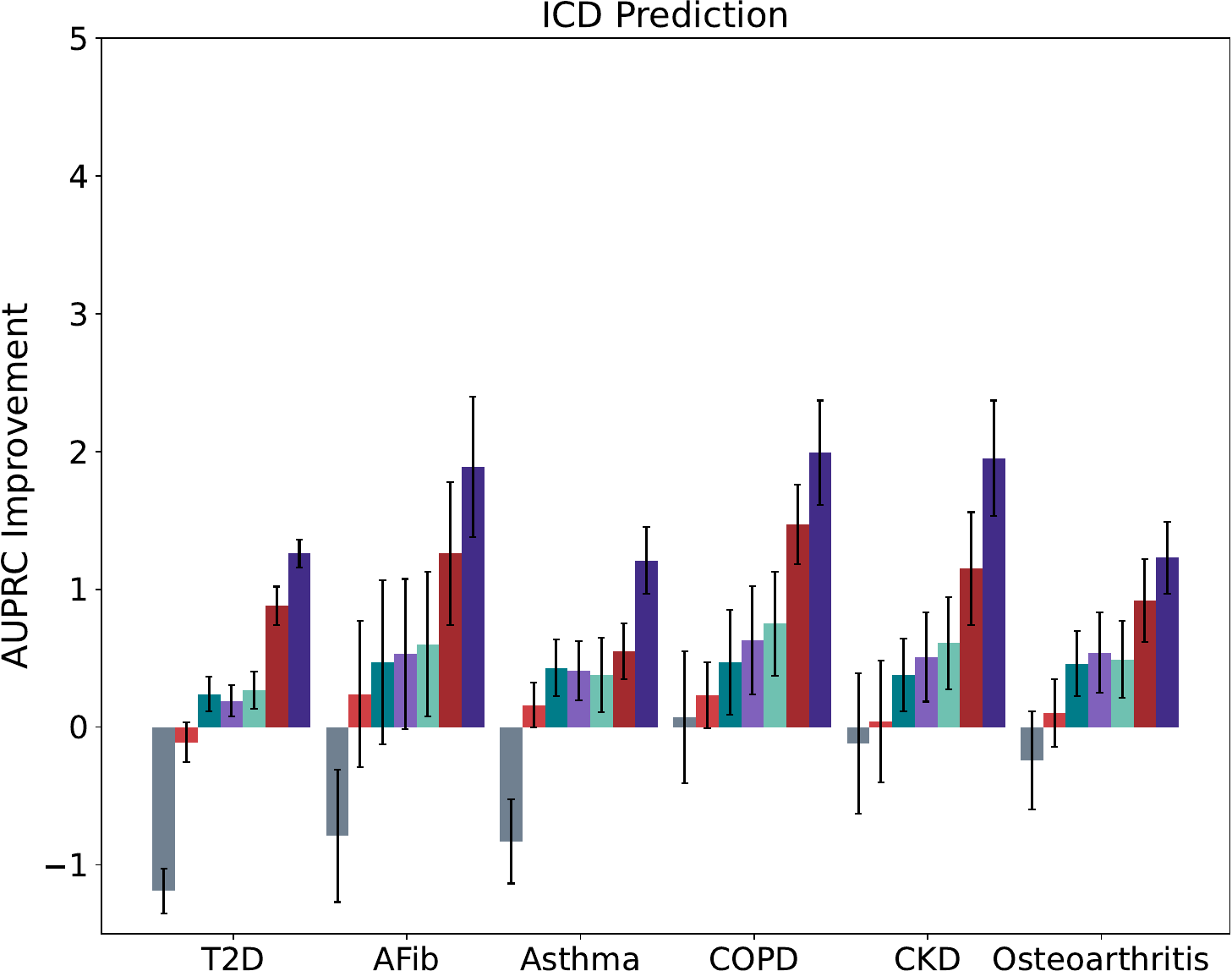} &
    \hspace{-0.5cm}
    \includegraphics[width=0.325\textwidth]{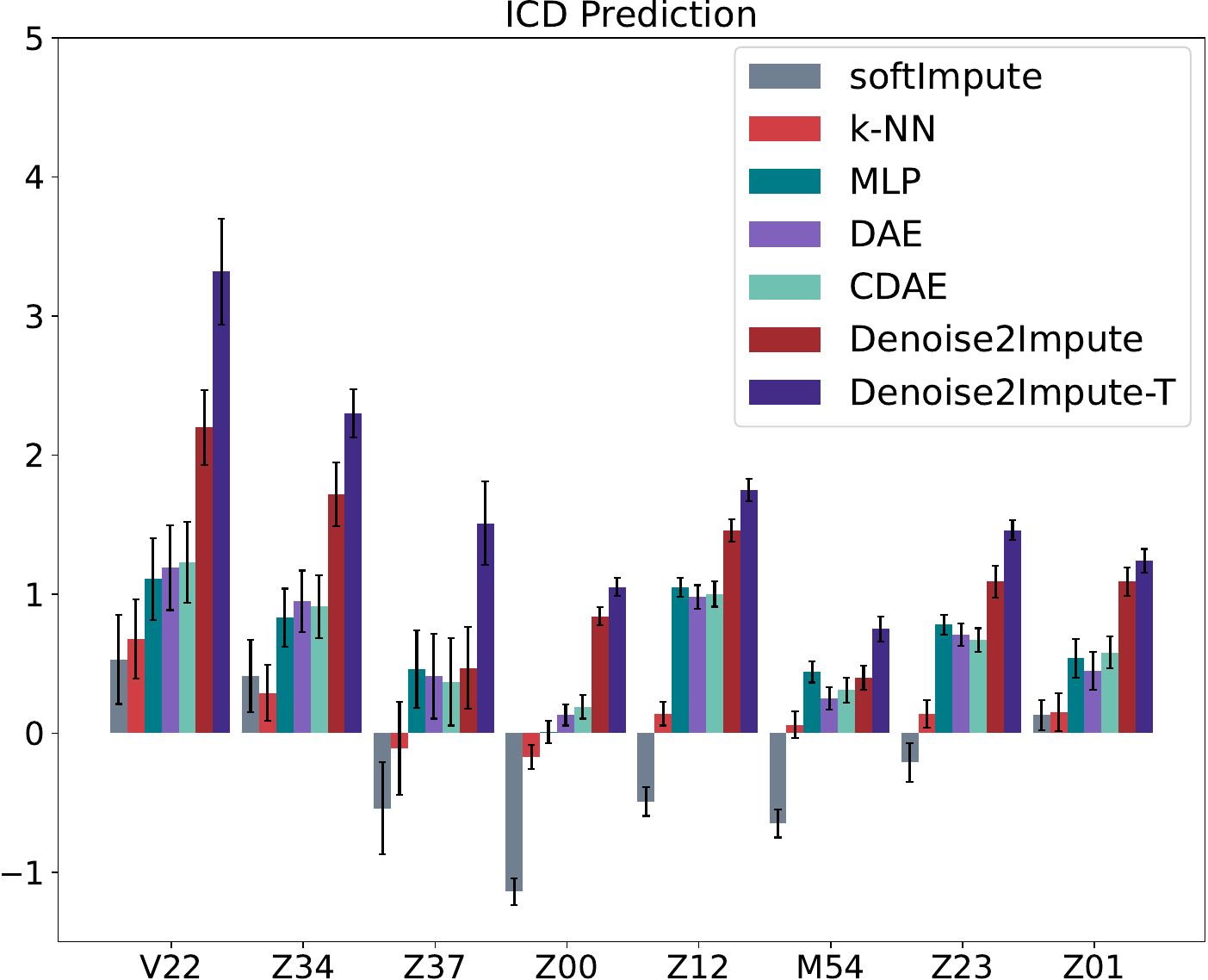} &
    \hspace{-0.5cm}
    \includegraphics[width=0.333\textwidth]{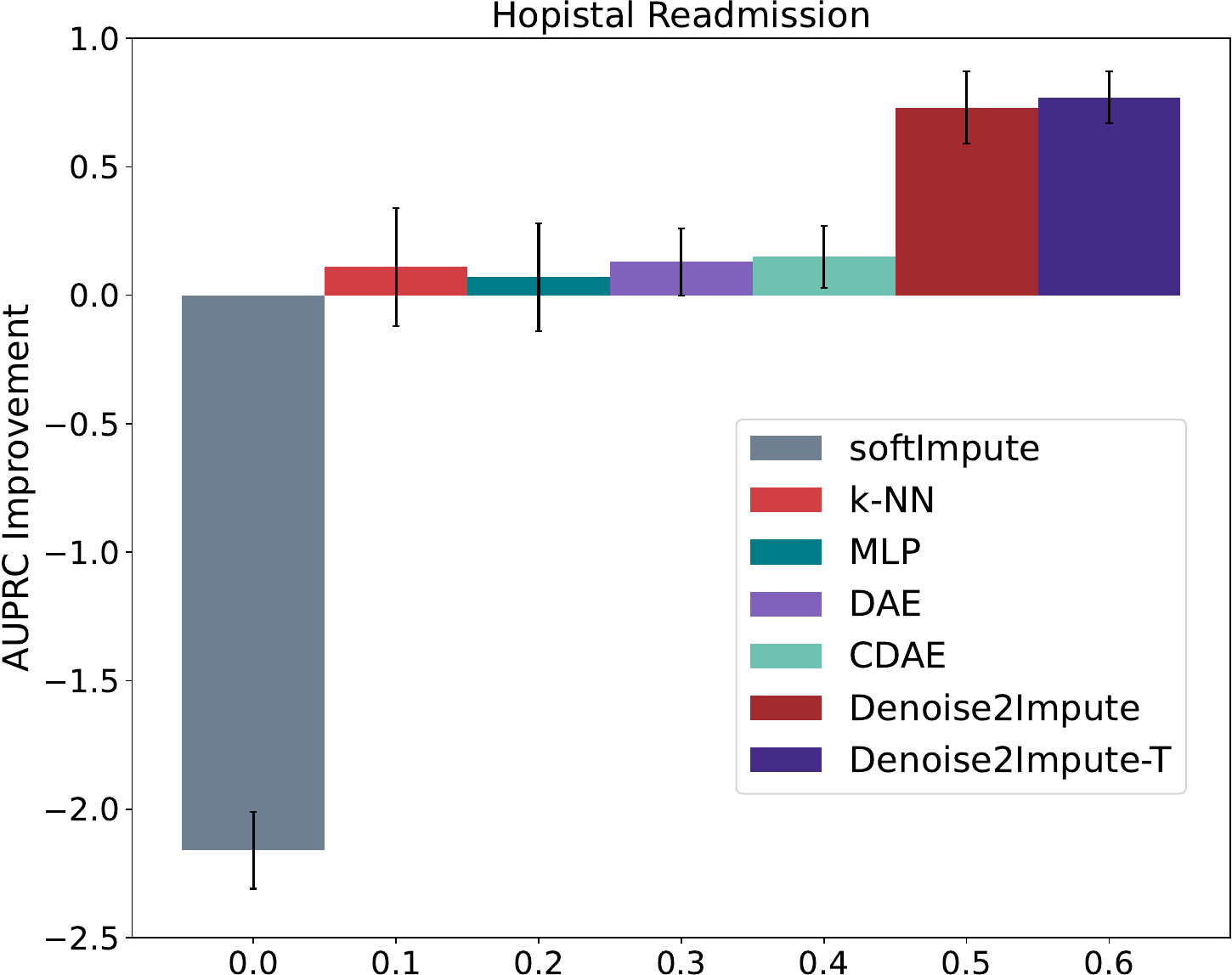} \\
    {(A) Diseases} &   {(B) ICD codes} & {(C) Downstream Task}
    \end{tabular}
  \caption{Results for ICD code prediction for six common chronic conditions (A) and eight randomly chosen ICD codes (B). Results for Hospital Readmission task (C). 
  Difference in AUPRC relative to using $\mathcal{D}_1$ and 95\% confidence intervals are plotted. The common chronic conditions in (A) and the description of ICD codes in (B) are provided in Table~\ref{app:tab:eight_disease_captions} in Appendix~\ref{app:icd_details}. }
  \label{fig:icd_prediction}
\end{figure*}

\paragraph{Benchmarking Imputation on Downstream Tasks} We investigate how imputation
performance can affect downstream task prediction, by benchmarking on two downstream tasks.
Specifically, we first use the trained Denoise2Impute models to denoise the noisy data $\mathcal{D}_1$ and then feed the denoised values into a downstream classifier~\cite{ke2017lightgbm}.

To investigate whether the proposed approach was able to learn the relationship between diagnosis codes, we aim to predict a particular ICD code among the $T$ codes given the others.
To avoid information leakage, the target ICD code is set to 0 in the patient EHR before it is passed to the imputation methods.
We begin by selecting six ICD codes for common chronic diseases and compute the difference in AUPRC with respect to using the original data, $\mathcal{D}_1$, along with 95\% confidence intervals~(Figure~\ref{fig:icd_prediction}A).
We observe that both Denoise2Impute and Denoise2Impute-T outperform all other methods for all six diseases, where Denoise2Impute-T tends to outperform Denoise2Impute.
Interestingly, we found that softImpute often led to worse performance compared to using the original data, $\mathcal{D}_1$.
The performance of softImpute suggests that treating all 0s as missing and performing imputation may lead to suboptimal performance in downstream tasks.
In Figure~\ref{fig:icd_prediction}B, we repeated the above experiment for eight additional randomly chosen ICD codes where we see the same trend as before; the description for the ICD codes can be found in Table~\ref{app:tab:eight_disease_captions} in Appendix~\ref{app:icd_details}.

Finally, we evaluate the performance on predicting hospital readmission, which is a popular clinical prediction task in the literature~\citep{jiang2023health, kansagara2011risk, huang2019clinicalbert, mahmoudi2020use}. While ICD-10 codes can be used to predict hospital readmission, the fact that readmission prediction is not an ICD-10 code itself enhances its credibility as a benchmarking dataset since we no longer need to set a dimension to all 0s before performing denoising. In Figure~\ref{fig:icd_prediction}C, we observe that both Denoise2Impute and Denoise2Impute-T
achieve the highest AUPRC compared to the other methods in prediction of hospital readmission and that this increase is statistically significant based on the $95\%$ confidence intervals.
Moreover, we see that the Denoise2Impute-T leads to a slight performance gain compared to Denoise2Impute but that this improvement is not statistically significant.
Taken together, Denoise2Impute and Denoise2Impute-T outperform all existing imputation approaches on both, imputation and prediction tasks.

\section{Discussion}
Missing data presents a substantial challenge for machine learning in the medical domain, as it can negatively impact performance and perpetuate existing disparities in healthcare~\citep{vyas2020hidden,pierson2021algorithmic}. The problem becomes even more complex when the presence of missing entries is unclear. In response to this challenge, we introduced Denoise2Impute, a denoising-based approach that utilizes relationships between diagnosis codes to identify and impute missing codes effectively.

While our empirical evidence highlights Denoise2Impute's effectiveness, we observed that certain diagnosis codes posed challenges for the approach. 
These difficulties may arise due to the extreme sparsity of some codes or insufficient training examples, making it hard for the model to learn useful dependencies between diagnosis codes. 
In such cases, Denoise2Impute's output could introduce noise into the data, as the network struggles to differentiate between missing and existing diagnosis codes. 
To enhance the model's performance, we developed a theoretically motivated extension called Denoise2Impute-T, which employs learned thresholds to regularize the model's output.
Our empirical results show that using learned thresholds with Denoise2Impute-T leads to performance gains over the original Denoise2Impute.

Despite these encouraging findings, there are limitations to our approach and opportunities for improvement. 
First, training Denoise2Impute necessitates a noiseless and noisy training set; while this may seem challenging, their are a number of ways to obtain such a dataset.
First, insurance companies and care delivery organizations (CDOs) commonly have overlapping patient population where an EHR for the same patient can differ between the insurance company and the CDO.
Having access to these two dataset can allow for the creation of a noiseless dataset which can be used to train the model.
Another approach is to create an artificial noisy dataset given a dataset of EHR's via 0-1 binary masking, which is commonly employed in large language models~\citep{devlin2018bert}.
Lastly, the approach of~\citep{wu2023collecting} can be used to allow for a cost-effective construction of a dataset of noisy and clean EHRs. 
Additionally, incorporating other tabular information, such as medications and laboratory results, could further enhance performance. 
Expanding the model to include multiple data modalities, like clinical text and the time-series component of EHR data, could provide a more comprehensive representation of patient history and lead to improved performance. 
For example, the threshold function $\omega_{\bm{\phi}}$ could be refined by considering additional information sources, allowing for patient-specific thresholding.

\bibliography{ref.bib}
\appendix
\onecolumn
\section{Proof of Theorem~\ref{thm:optimal_denoising}}\label{app:proof}
For ease of presentation, we rewrite the theorem below.
\begin{theorem}
  Let $p(\vx)$, $\vx \in \mathcal{X} \subseteq \mathbb{R}^T$, be the distribution over the data and let $p(\noisyx \mid \vx)$,~$\noisyx \in \tilde{\mathcal{X}} \subseteq \mathbb{R}^T$,  be the noise distribution, which we assume is of the following form
  \begin{equation}
    p( \noisyx \mid \vx) = \beta \delta(\noisyx = \vx) + (1 - \beta)  q(\noisyx \mid \vx) .
  \end{equation}
  Moreover, let $q(\noisyx)$ be
  \begin{equation}
      q(\noisyx) = \int q(\noisyx \mid \vx) p(\vx) d\vx ,
  \end{equation}
  be the marginal distribution of the noisy data.
  With respect to mean squared error, the optimal denoising function is
  \begin{equation}
    f_\star(\noisyx) = \omega(\noisyx) \noisyx + \left( 1 - \omega(\noisyx)\right) \, g_\star(\noisyx)
  \end{equation}
  where
  \begin{gather}
      g_\star(\noisyx) \triangleq \mathbb{E}_{q(\vx \mid \noisyx )}[\vx], \\
      \omega(\noisyx) \triangleq \frac{p(\noisyx)}{p(\noisyx) + \gamma \tilde{q}(\noisyx)}, \quad \omega(\noisyx): \tilde{\mathcal{X}} \rightarrow [0, 1],
  \end{gather}
  and $\gamma \triangleq \frac{1 - \beta}{\beta}$.
\end{theorem}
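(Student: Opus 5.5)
The optimal function under mean squared error is the conditional mean $f_\star(\noisyx) = \mathbb{E}_{p(\vx \mid \noisyx)}[\vx]$. So the proof reduces to computing this posterior mean under the mixture noise model and reorganizing it into the stated convex combination.

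\textbf{Step 1: the MMSE characterization.} Write the objective as $\int p(\noisyx) \, \mathbb{E}_{p(\vx\mid\noisyx)}\big[\|f(\noisyx)-\vx\|^2\big] \, d\noisyx$. It decouples pointwise in $\noisyx$. Taking the first variation in $f(\noisyx)$ and setting it to zero, equivalently completing the square, gives $f_\star(\noisyx) = \int \vx \, p(\vx, \noisyx)\, d\vx \,/\, p_{\mathrm{marg}}(\noisyx)$. Here $p_{\mathrm{marg}}$ is the marginal of the noisy data under the full noise model.

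\textbf{Step 2: compute the joint and the marginal.} Substitute the mixture, $p(\vx,\noisyx) = \beta\, p(\vx)\,\delta(\noisyx=\vx) + (1-\beta)\, p(\vx)\, q(\noisyx\mid\vx)$, and integrate over $\vx$.
\begin{itemize}
\item The numerator is $\beta\, \noisyx\, p(\noisyx) + (1-\beta) \int \vx\, q(\noisyx\mid\vx)p(\vx)\,d\vx$. The second integral equals $(1-\beta)\, q(\noisyx)\, g_\star(\noisyx)$, where $q(\vx\mid\noisyx) = q(\noisyx\mid\vx)p(\vx)/q(\noisyx)$ is the posterior under the pure-noise channel. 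This is exactly how $g_\star$ is defined in the statement.
\item The denominator is $\beta\, p(\noisyx) + (1-\beta)\, q(\noisyx)$.
\end{itemize}

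\textbf{Step 3: rearrange.} Divide numerator and denominator by $\beta$ and set $\gamma = (1-\beta)/\beta$. This gives
\begin{equation*}
f_\star(\noisyx) = \frac{p(\noisyx)}{p(\noisyx)+\gamma q(\noisyx)}\,\noisyx + \frac{\gamma q(\noisyx)}{p(\noisyx)+\gamma q(\noisyx)}\, g_\star(\noisyx).
\end{equation*}
The second coefficient is $1-\omega(\noisyx)$, with $\tilde{q}$ in the statement read as the noisy marginal $q(\noisyx)$. The bound $\omega \in [0,1]$ follows from nonnegativity of densities and $\gamma \ge 0$.

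\textbf{Main obstacle.} The delicate step is Step 2, because of the Dirac term. When $\delta$ is integrated against $p(\vx)$, the result $p(\noisyx)$ must be read as the data density evaluated at $\noisyx$, which requires $p$ and $q(\cdot)$ to be densities with respect to a common base measure. In the binary EHR setting this is counting measure on $\{0,1\}^T$, and there everything is a finite sum and the Dirac becomes an indicator. I would state this convention explicitly and carry out the computation with a generic base measure, so that it covers both the continuous and the discrete case.

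A secondary point is that $\beta > 0$ is needed for $\gamma$ to be defined. The edge case $\beta = 0$ is handled by continuity or separately: then $\omega = 0$ and $f_\star = g_\star$. Finally, I would restrict to $\noisyx$ with $p_{\mathrm{marg}}(\noisyx) > 0$, since elsewhere the objective does not constrain $f$.
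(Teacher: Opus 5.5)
Your proposal is correct and is essentially the paper's argument. The paper substitutes the mixture into the expanded MSE and then applies the Euler--Lagrange condition, whereas you take the pointwise first-order condition (conditional mean) first and substitute afterwards; both reach $f(\noisyx)\,[\beta p(\noisyx) + (1-\beta) q(\noisyx)] = \beta\,\noisyx\, p(\noisyx) + (1-\beta)\, q(\noisyx)\, g_\star(\noisyx)$ and finish by dividing by $\beta$, with your remarks on the common base measure, $\beta>0$, reading $\tilde{q}$ as $q$, and positive noisy marginal making explicit what the paper leaves implicit.
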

\begin{proof}
    To solve for $f_\star$, we use calculus of variations~\citep{liberzon2011calculus} but first, we will expand the loss function to make it easier to work with.
    We start by expanding the MSE and absorbing terms that are not a function of $f$ into a constant, $C$,
    \begin{align}
        &\int (f(\noisyx) - \vx)^\top (f(\noisyx) - \vx) p(\vx, \noisyx) d\vx d\noisyx \\
        & = \int f(\noisyx)^\top f(\noisyx) p(\vx, \noisyx) d\vx d\noisyx - 2 \int f(\noisyx)^\top \vx \, p(\vx, \noisyx) d\vx d\noisyx + \int \vx^\top \vx \, p(\vx, \noisyx) d\vx d\noisyx \\
        \label{eq:expanded_mse}
        & = \int f(\noisyx)^\top f(\noisyx) p(\vx, \noisyx) d\vx d\noisyx - 2 \int f(\noisyx)^\top \vx \, p(\vx, \noisyx) d\vx d\noisyx + C .
    \end{align}
    Dropping the constant, $C$, and using the chain rule of probability~\citep{casella2021statistical} to rewrite the joint distribution $p(\noisyx, \vx)$ as $p(\noisyx, \vx) = p(\vx) p(\noisyx \mid \vx)$, we have
    \begin{equation}\label{eq:proof_intermediate_eq}
        \mathcal{L}(f) \triangleq \textcolor{red}{\int f(\noisyx)^\top f(\noisyx) p(\vx) p(\noisyx \mid \vx) d\vx d\noisyx} - \textcolor{blue}{2 \int f(\noisyx)^\top \vx \, p(\vx) p(\noisyx \mid \vx) d\vx d\noisyx} .
    \end{equation}
    We will first simplify the \textcolor{red}{left term} of~\eqref{eq:proof_intermediate_eq} where we plug in the definition of the noise distribution, $p( \noisyx \mid \vx) = \beta \delta(\noisyx = \vx) + (1 - \beta)  q(\noisyx \mid \vx)$
    \begin{align}
        & \textcolor{red}{\int f(\noisyx)^\top f(\noisyx) p(\vx) p(\noisyx \mid \vx) d\vx d\noisyx} \\
        & = \textcolor{red} {\int f(\noisyx)^\top f(\noisyx) p(\vx) \left[ \beta \delta(\noisyx = \vx) + (1 - \beta) q (\noisyx \mid \vx) \right] d\vx d\noisyx} , \\
        & = \textcolor{red} {\beta \int f(\noisyx)^\top f(\noisyx) p(\vx) \delta(\noisyx = \vx) d\vx d\noisyx } + \textcolor{orange} { (1 - \beta) \int f(\noisyx)^\top f(\noisyx) p(\vx) q(\noisyx \mid \vx) d\vx d\noisyx} , \\
        &= \textcolor{red} {\beta \int f(\vx)^\top f(\vx) p(\vx=\noisyx) d\vx } + \textcolor{orange} { (1 - \beta) \int f(\noisyx)^\top f(\noisyx) q(\noisyx) d\noisyx} , \\
        \label{eq:proof_left_term}
        & = \textcolor{red} {\beta \int f(\noisyx)^\top f(\noisyx) p(\noisyx) d\noisyx } + \textcolor{orange} { (1 - \beta) \int f(\noisyx)^\top f(\noisyx) q(\noisyx) d\noisyx} .
    \end{align}
    We now move on to the \textcolor{blue}{right term} of~\eqref{eq:proof_intermediate_eq}
    \begin{align}
        & \textcolor{blue}{2 \int f(\noisyx)^\top \vx \, p(\vx) p(\noisyx \mid \vx) d\vx d\noisyx} \\
        & = \textcolor{blue}{2 \int f(\noisyx)^\top \vx \, p(\vx) \left[ \beta \delta(\noisyx = \vx) + (1 - \beta) q(\noisyx \mid \vx) \right] d\vx d\noisyx} , \\
        & = \textcolor{blue}{ 2\beta \int f(\noisyx)^\top \vx \, p(\vx)  \delta(\noisyx = \vx) d\vx d\noisyx } + \textcolor{purple}{2(1 - \beta) \int f(\noisyx)^\top \vx \, p(\vx)q(\noisyx \mid \vx) d\vx d\noisyx} , \\
        & = \textcolor{blue}{ 2\beta \int f(\vx)^\top \vx \, p(\vx = \noisyx) d\vx  } + \textcolor{purple}{2(1 - \beta) \int f(\noisyx)^\top \vx \, q(\noisyx) p(\vx \mid \noisyx) d\vx d\noisyx} , \\
        & = \textcolor{blue}{ 2\beta \int f(\noisyx)^\top \noisyx \, p(\noisyx) d\noisyx  } + \textcolor{purple}{2(1 - \beta) \int f(\noisyx)^\top g_\star(\noisyx) \, q(\noisyx) d\noisyx} , \\
        \label{eq:proof_right_term}
        & = \textcolor{blue}{2 \beta \int f(\noisyx)^\top \noisyx \, p(\noisyx) d\noisyx  } + \textcolor{purple}{2 (1 - \beta) \int f(\noisyx)^\top g_\star(\noisyx) \, q(\noisyx) d\noisyx},
    \end{align}
    where $g_\star(\noisyx) \triangleq \mathbb{E}_{q(\vx \mid \noisyx )}[\vx]$.

    Plugging~\eqref{eq:proof_left_term}~and~\eqref{eq:proof_right_term} into~\eqref{eq:proof_intermediate_eq}, we get
    \begin{equation}
      \begin{split}
        \label{eq:thm_final_simplification}
        \mathcal{L}(f) = & \textcolor{red} {\beta \int f(\noisyx)^\top f(\noisyx) p(\noisyx) d\noisyx } + \textcolor{orange} { (1 - \beta) \int f(\noisyx)^\top f(\noisyx) q(\noisyx) d\noisyx} \\
        & - \textcolor{blue}{ 2 \beta \int f(\noisyx)^\top \noisyx \, p(\noisyx) d\noisyx  } - \textcolor{purple}{2 (1 - \beta) \int f(\noisyx)^\top g_\star(\noisyx) \, q(\noisyx) d\noisyx}.
      \end{split}
    \end{equation}
    Using the Euler Lagrange equation~\citep{liberzon2011calculus}, we solve for $f_\star$ by taking a variational gradient of ~\eqref{eq:thm_final_simplification} with respect to $f$ and setting it equal to 0.
    \begin{gather}
        \frac{\partial \mathcal{L}(f)}{\partial f} = 0 \\
        2 \beta f(\noisyx) p(\noisyx) + 2 (1 - \beta) f(\noisyx) q(\noisyx) - 2 \beta \noisyx p(\noisyx) - 2 (1 - \beta) g_\star(\noisyx) \, q(\noisyx) = 0 \\
        2 \beta f(\noisyx) p(\noisyx) + 2 (1 - \beta) f(\noisyx) q(\noisyx) = 2 \beta \noisyx p(\noisyx) + 2 (1 - \beta) g_\star(\noisyx) \, q(\noisyx) \\
        \beta f(\noisyx) p(\noisyx) + (1 - \beta) f(\noisyx) q(\noisyx) = \beta \noisyx p(\noisyx) + (1 - \beta) g_\star(\noisyx) \, q(\noisyx) \\
        f(\noisyx) p(\noisyx) + \frac{1 - \beta}{\beta} f(\noisyx) q(\noisyx) = \noisyx p(\noisyx) + \frac{1 - \beta}{\beta} g_\star(\noisyx) \, q(\noisyx) \\
        f(\noisyx) \left[ p(\noisyx) + \frac{1 - \beta}{\beta} q(\noisyx) \right] = \noisyx p(\noisyx) + \frac{1 - \beta}{\beta} g_\star(\noisyx) \, q(\noisyx) \\
        f_\star(\noisyx) = \frac{p(\noisyx)}{p(\noisyx) + \frac{1 - \beta}{\beta} q(\noisyx)} \noisyx + \frac{\frac{1 - \beta}{\beta} q(\noisyx)}{p(\noisyx) + \frac{1 - \beta}{\beta} q(\noisyx)} g_\star(\noisyx)
    \end{gather}
    Letting $\gamma \triangleq \frac{1 - \beta}{\beta}$, $\omega(\noisyx) \triangleq \frac{p(\noisyx)}{p(\noisyx) + \gamma q(\noisyx)}$ and plugging in the definition of $g_\star(\noisyx)$, we get
    \begin{equation}
      f_\star(\noisyx) = \omega(\noisyx) \noisyx + \left( 1 - \omega(\noisyx) \right) \mathbb{E}_{q(\vx \mid \noisyx )}[\vx],
    \end{equation}
    completing the proof.
\end{proof}

\section{Training procedure for Denoise2Impute-T}
\label{app:training_details}
We train Denoise2Impute-T in a two-stage fashion.
First, we train Denoise2Impute, $g_{\bm{\theta}}$, by minimizing the cross-entropy loss, $\mathcal{L}$,
\begin{gather}
  \ell_{CE}(g_{\bm{\theta}}(\noisyx), \vx) = \sum_{d=1}^T \ell_{ce} (g_{\bm{\theta}}(\noisyx)_d, x_d), \\
  \ell_{ce}(g_{\bm{\theta}}(\noisyx)_d, x_d) = x_d \log g_{\bm{\theta}}(\noisyx)_d + (1 - x_d) \log (1 - g_{\bm{\theta}}(\noisyx)_d)),
\end{gather}
where $x_d$ is the $d$-th dimension of $\vx$ and $g_{\bm{\theta}}(\noisyx)_d$ is the $d$-th dimension of $g_{\bm{\theta}}(\noisyx)$.
To place more emphasis on instances where a 0 must be changed to a 1, we modify the cross-entropy loss to be
\begin{equation}\label{eq:scaled_nll}
  \ell_{CE} (g_{\bm{\theta}}(\noisyx), \vx) = \sum_{d=1}^T \indicator_{[\tilde{x}_d = x_d]} \ell_{ce} (g_{\bm{\theta}}(\noisyx)_d, x_d) + \lambda \indicator_{[\tilde{x}_d \neq x_d]} \ell_{ce} (g_{\bm{\theta}}(\noisyx)_d, x_d),
\end{equation}
where $\indicator_{[\tilde{x}_d \neq x_d]}$ corresponds to the case where where $\tilde{x}_d = 0$ and $x_d=1$.
In all experiments, we use $\lambda=2.0$.
During training, we also randomly mask out dimensions of $\noisyx$ with probability $p=0.3$ as we found it helped with performance.

Fixing $g_{\bm{\theta}}$, we construct Denoise2Impute-T by introducing a weighting function, $\omega_{\bm{\phi}}$
\begin{equation}
  f (\noisyx) = \omega_{\bm{\phi}}(\noisyx) x + (1 - \omega_{\bm{\phi}}(\noisyx)) g_{\bm{\theta}}(\noisyx)
\end{equation}
and optimize~\eqref{eq:scaled_nll} for $f$, i.e.,
\begin{equation}
  \ell_{CE}(f_{\bm{\theta}}(\noisyx), \vx) = \sum_{d=1}^T \indicator_{[\tilde{x}_d = x_d]} \ell_{ce} (f_{\bm{\theta}}(\noisyx)_d, x_d) + \lambda \indicator_{[\tilde{x}_d \neq x_d]} \ell_{ce} (f_{\bm{\theta}}(\noisyx)_d, x_d),
\end{equation}
where only $\bm{\phi}$ is updated.
As $\omega_{\bm{\phi}}$ is not differentiable due to the indicator function, we approximate it during training via the following continuous relaxation
\begin{equation}\label{approx:}
  \omega_{\bm{\phi}}(\noisyx) \approx \sigma(\alpha * (g_{\bm{\theta}}(\noisyx) - \phi)) ,
\end{equation}
where $\sigma(\cdot)$ denotes the sigmoid function and $\alpha$ is a temperature parameter to modulate this approximation. 
In all experiments, we set $\alpha=100$. 
Let $m_{j}^{00}$ be the average of $p_{ij}^{00}$ and $m_{j}^{01}$ be the average of $p_{ij}^{01}$ on the training data, for $j=1,\cdots, T$. 
In principle, we want $m_{j}^{00}$ to be close to zero and $m_{j}^{01}$ to be close to one on the testing data. 
When optimizing $\bm{\phi}$, we constrain the learned parameters to satisfy $0\le \phi_j\le m_{j}^{00}.$

\section{Hyperparameters and Architecture Details}
\subsection{Details of Datasets}\label{app:datasets}
For training the denoising model, we begin with two sources of EHR data with overlapping patient populations from a large healthcare institute\footnote{To comply with the double-blind submission policy, we withhold the name of the institution providing the datasets; should the paper be accepted, these details will be provided.}, $\mathcal{D}_1 = \{\tilde{\mathbf{x}}_1, \ldots, \tilde{\mathbf{x}}_{N}\}$ and $\mathcal{D}_2 = \{\tilde{\mathbf{y}}_1, \ldots, \tilde{\mathbf{y}}_M\}$. 
Each data point is a vector of binary indicators for the presence of ICD-10 diagnosis codes in a patient's EHR, where we set the number of codes to be 993, i.e., $T=993$.
As $\mathcal{D}_1$ and $\mathcal{D}_2$ have overlapping patient populations, we construct the dataset as follows: for patient $i$ in the common population, we treat the corresponding code vector from $\mathcal{D}_1$ as the noisy data point, $\tilde{\mathbf{x}}_i$.
To construct the target code vector, $\vx_i$, we perform an element-wise OR operation between $\tilde{\mathbf{x}}_i$ and the same patient's code vector from $\mathcal{D}_2$, $\tilde{\mathbf{y}}_i$; the OR operation merges any diagnoses that may be unshared between the two data sources.
Using the above procedure, a dataset $\mathcal{D}$ of size 445,345 patients is constructed where 356,276 patients are used for training and 89,069 are used for testing.

\subsection{k-NN Details}
Let $\mathcal{B}$ be a subset of patients from the training set of the clean dataset, $\mathcal{D}$, that will be used as a buffer.
In our implementation, we set $k=5$ in all experiments and specify a tolerance $\tau$.
Given an EHR vector, $\noisyx$, we use the Hamming distance to find the closest neighbor in $\mathcal{B}$, which we denote by $\vx_n$.
If the Hamming distance between $\noisyx$ and $\vx_n$ is less than $\tau$ then all the dimensions in $\noisyx$ that have a 0 are replaced by the corresponding value in $\vx_n$.
If the Hamming distance between $\noisyx$ and $\vx_n$ is greater than $\tau$, then we keep $\noisyx$ as is.

\subsection{MLP Architecture Details}
For the MLP, we used 4 hidden layers, each of width 512, and ReLU activations where a skip connection is added between consecutive layers.
The output of the network is then passed through a sigmoid function to ensure they are valid probabilities.

For training, we used the AdamW optimizer~\citep{loshchilov2017decoupled} with learning rate $3.0\times 10^{-4}$ and a weight decay of $1.0\times 10^{-5}$ with a batch size of $N=128$.

\subsection{DAE Architecture Details}
For the encoder model in denoising auto encoder (DAE), we used 4 fully connected layers, each of width 512, and ReLU activations where a ResNet module is added in the second layer, third layer and fourth layer. 
The latent dimension of the encoder is 512. 
For the decoder model of DAE, we used a symmetric architecture as the encoder. 
The output of the network is then passed through a sigmoid function to ensure they are valid probabilities. We follow the same training method as that for MLP to train the DAE.

\subsection{CDAE Architecture Details}
We apply the collaborative denoising auto-encoder \textbf{(CDAE)} from~\citet{wu2016collaborative} to our noisy data as one baseline.
For the encoder model in CDAE, we used 4 fully connected layers, each of width 512, and Tanh activations where an ResNet module is added in the second layer, third layer and fourth layer. The latent dimension of the encoder is 512. The embedding dimension of each patient is 512. For the decoder model of CDAE, we used a symmetric architecture as the encoder. The output of the network is then passed through a sigmoid function to ensure they are valid probabilities. We follow the original implementation of CDAE to set the hyperparameters in our experiment. The corruption ratio is set as 0.5. The learning rate is $1.0\times 10^{-3}$. Adam optimizer is employed to optimize CDAE. The batch size is $N=128$. The number of epochs for training is 50.

\subsection{Set Transformer Architecture Details}
The shape of input to the Set Transformer is (N, T, 1), where N is the batch size and T is the number of codes for each patient. 
We introduce $T$ learnable embeddings, one for each dimension of $\noisyx$, where each embedding is of dimension $D$.
The embeddings are appended with the input leading to a tensor of shape (N, T, D + 1).
$L$ layers of the set attention blocks (SABs) are then applied followed by
a parallel linear block that maps the output from attention blocks to a tensor of dimension (N, T, 1). 
Finally, a sigmoid function outputs the probability of each code being present. 
The neural network architecture of $g_{\bm{\theta}}$ is detailed in the Figure~\ref{fig:arch}. 

\begin{figure}[ht!]
  \begin{center}
\includegraphics[width=9cm]{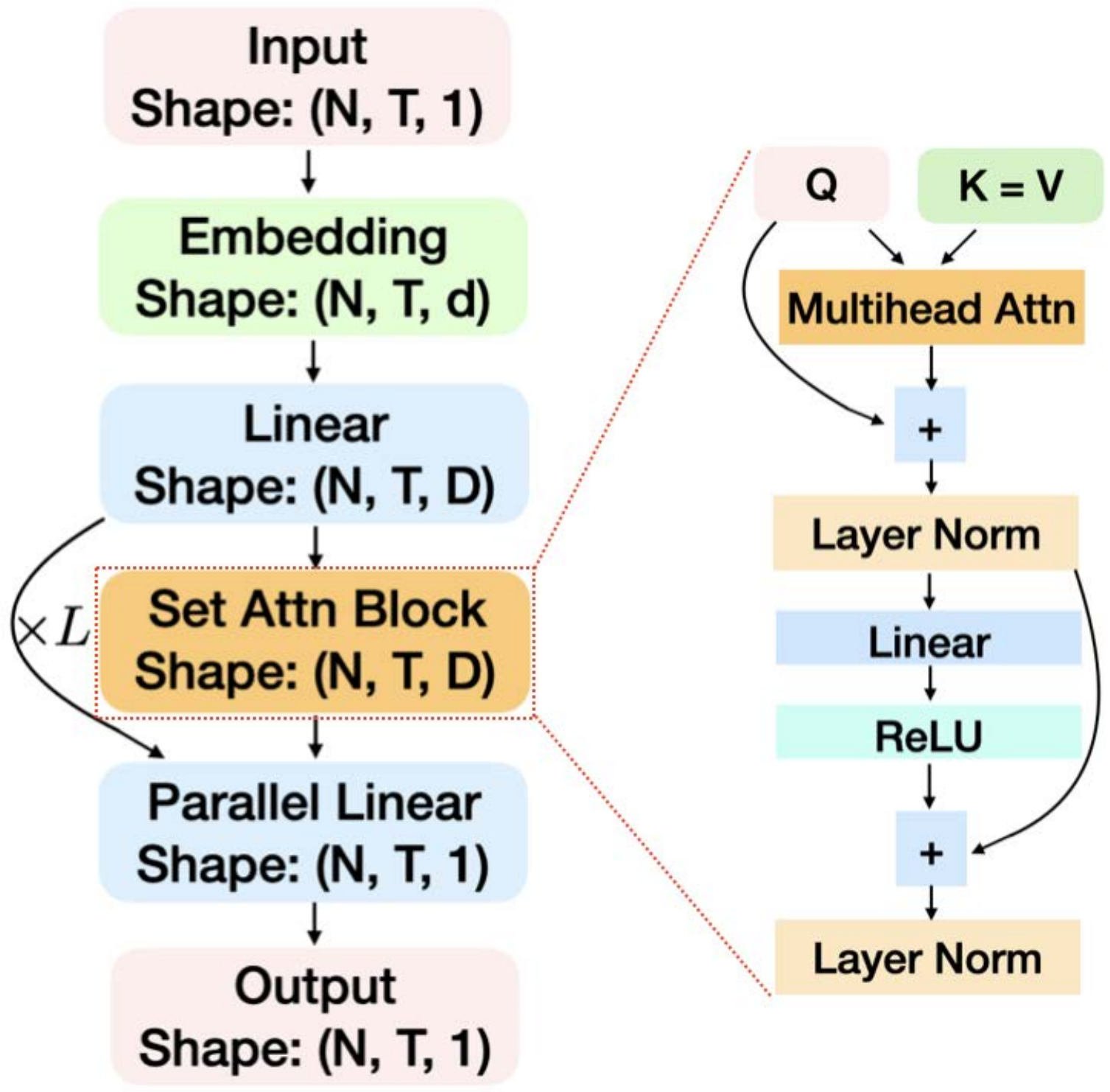}
\caption{Neural Network Architecture of the denoising model $g_{\bm{\theta}}$, where one of the key modules is the set attention block (SAB). 
The tensor shape of outputs in each layer is provided. We apply $L$ layers of the SAB. \label{fig:arch}}
 \end{center}
\end{figure}

In all our experiments, we set $D=200$ and $L=4$, where for each SAB, we set the number of heads to be 10.
We use the AdamW optimizer~\citep{loshchilov2017decoupled} with learning rate $3.0\times 10^{-4}$ and a weight decay of $1.0\times 10^{-5}$ with a batch size of $N=48$. All methods in this paper is trained with 50 epochs.
\begin{table}[ht!]
  \centering
  \resizebox{\columnwidth}{!}{\begin{tabular}{ |c | c | c | c |c | c | c | c |}
    \hline
    Methods  & $\mathcal{D}_1$ & Prevalence & MissForest & MLP & CDAE & D2Impute & D2Impute-T \\ \hline
    Micro AUPRC & 62.67 & 70.39 & 64.14 & 79.10 & 79.13 & 81.28 & {\bf 81.64}  \\ \hline
    Macro AUPRC & 51.86 & 52.37 & 52.23 & 61.91  & 62.16 & 64.89 & {\bf 65.53}  \\ \hline
  \end{tabular}}  
\captionof{table}{Micro AUPRC and Macro AUPRC (higher is better) on the test set where noisy vectors were obtained from $\mathcal{D}_1$. D2Impute is short for Denoise2Impute. \label{app:tab:test_set_results}}
\end{table}
\subsection{Gradient Boosted Tree Details} 
For the downstream tasks, we used a light gradient boosting decision tree model (LGBM)~\citep{ke2017lightgbm} as it had robust downstream prediction performance.
In all experiments, we set the hyperparameters of LGBM as follows: $\mathrm{n\_estimators}=1000$, $\mathrm{learning\_rate}=0.05$ $\mathrm{max\_depth}=10$, $\mathrm{reg\_alpha}=0.5$, $\mathrm{reg\_lambda}=0.5$, $\mathrm{scale\_pos\_weight}=1$, $\mathrm{min\_data\_in\_bin}=128.$

\section{Supplementary Experiments}\label{app:supp_experiments}
In the following section, we provide additional experimental results where we also investigate how the performance changes when we treat $\mathcal{D}_2$ as the noisy input instead of $\mathcal{D}_1$.
\subsection{Details for ICD code prediction task\label{app:icd_details}}
Given the training set $\mathcal{D}$, which consists of 356,276 patients, we used 222,672 patients to train Denoise2Impute and Denoise2Impute-T and the other 133,604 patients are used to train the LGBM model.
We selected ICD codes corresponding to seven common chronic conditions: Type-II Diabetes (T2D), Atrial Fibrillation (AFib), Asthma, Chronic Obstructive Pulmonary Disease (COPD), Chronic Kidney Disease (CKD), Hypertensive Heart Disease (HTN-Heart) and Osteoarthritis.
We also randomly selected eight additional ICD codes for testing, where we ensured that their prevalence was greater than 6\%.
The description for the eight randomly selected ICD codes are shown below~(Table~\ref{app:tab:eight_disease_captions}).
Here we present detailed results on both, the seven common chronic diseases~(Table~\ref{app:tab:d1_common_chronic_codes}) and the eight randomly chosen codes~(Table~\ref{app:tab:d1_eight_random_codes}), which correspond to Figure~\ref{fig:icd_prediction}A and Figure~\ref{fig:icd_prediction}B, respectively.
\begin{table}[ht!] 
  \centering
    \begin{tabular}{ |c | c |}
            \hline
            ICD-10 codes & Description of Diagnosis ICD-10 codes  \\ \hline
            Z37&Outcome of delivery \\ \hline
            Z00&Encounter for general examination without CSRD \\ \hline
            Z12&Encounter for screening for malignant neoplasms \\ \hline
            M54&Dorsalgia \\ \hline
            Z23&Encounter for immunization \\ \hline
            Z01&Encounter for other special examination without CSRD  \\ \hline
            V22&Motorcycle rider injured in collision with two- or three-wheeled motor vehicle \\ \hline
            Z34&Encounter for supervision of normal pregnancy \\ \hline     
      \end{tabular}
  \caption{Description of additional ICD codes from Fig.~\ref{fig:icd_prediction}. CSRD is short for without complaint, suspected or reported diagnosis.}
  \label{app:tab:eight_disease_captions}
\end{table}

\begin{table}[ht!]
  \centering
  \begin{tabular}{ |c | c | c | c | c | c | c | c |}
  \hline
      &  T2D & AFib &  Asthma & COPD & CKD & HTN & Osteoarthritis \\ \hline
    Ground Truth & 90.77 & 58.40 & 73.63 & 71.59 & 80.03 & 44.77 & 48.76 \\ 
    & $\pm$ 0.14 & $\pm$ 0.50 & $\pm$ 0.29 & $\pm$ 0.46 & $\pm$ 0.44 & $\pm$ 0.64 & $\pm$ 0.32  \\ \hline
    $\mathcal{D}_1$ & 78.47 &  40.85 & 47.26  &  58.08  &  61.90  &  32.29 & 38.13 \\ 
    & $\pm$0.14 & $\pm$ 0.50 & $\pm$0.29 & $\pm$ 0.46 &  $\pm$ 0.47 & $\pm$0.58 &   $\pm$0.31 \\ \hline
    softImpute & 77.28 & 40.06 & 46.43 & 58.15 & 61.78 & 32.46 & 37.89 \\ 
    & $\pm$ 0.16 & $\pm$ 0.48 & $\pm$ 0.31 & $\pm$ 0.48 & $\pm$ 0.51 & $\pm$ 0.63 & $\pm$ 0.36  \\ \hline
    k-NN & 78.36 & 41.09 & 47.42 & 58.31 & 61.94 & 32.23 & 38.23 \\
    & $\pm$ 0.15 & $\pm$ 0.53 & $\pm$ 0.17 & $\pm$ 0.24 & $\pm$ 0.44 & $\pm$ 0.64 & $\pm$ 0.25 \\ \hline
    MLP & 78.71 & 41.32 & 47.69 & 58.55 & 62.28 & 32.02 & 38.59 \\  
    & $\pm$ 0.13 & $\pm$ 0.60 & $\pm$ 0.22 & $\pm$ 0.38 & $\pm$ 0.27 & $\pm$ 0.54 & $\pm$ 0.24  \\ \hline
    MLP-T & 79.04 & 41.59 & 47.93 & 58.79 & 62.75 & 32.23 & 38.87 \\  
    & $\pm$ 0.16 & $\pm$ 0.57 & $\pm$ 0.26 & $\pm$ 0.37 & $\pm$ 0.32 & $\pm$ 0.59 & $\pm$ 0.27  \\ \hline
    DAE & 78.66 &  41.38 & 47.67 &  58.71 & 62.41 & 32.07 & 38.67 \\
      & $\pm$ 0.12 & $\pm$ 0.55 & $\pm$ 0.24 & $\pm$ 0.38 & $\pm$ 0.33 & $\pm$ 0.61 & $\pm$ 0.29 \\ \hline
    CDAE & 78.74 & 41.45 & 47.64 &  58.83 &  62.51 & 32.11 & 38.62 \\
    &$\pm$ 0.14 & $\pm$ 0.53 & $\pm$ 0.27 & $\pm$0.38 & $\pm$ 0.33 & $\pm$ 0.42 & $\pm$ 0.30   \\ \hline
    D2Impute  & 79.35 & 42.11 &  47.81  & 59.55 &  63.05 & 32.39  &   39.05 \\  
    & $\pm$ 0.14 & $\pm$ 0.52 & $\pm$ 0.2 & $\pm$ 0.29 & $\pm$ 0.42 & $\pm$ 0.49 & $\pm$ 0.30 \\ \hline
    D2Impute-T & {\bf 79.73} & {\bf 42.74} & {\bf 48.47} & {\bf 60.07} & {\bf 63.85} & {\bf 32.96} & {\bf 39.36} \\ 
    & $\pm$ 0.10 & $\pm$ 0.51 & $\pm$ 0.29 & $\pm$ 0.38 & $\pm$ 0.42 & $\pm$ 0.52 & $\pm$ 0.26 \\ \hline
\end{tabular}
\caption{Results for 7 ICD codes corresponding to common chronic conditions. Mean AUPRC and $95\%$ confidence interval (CI) are provided. We bootstrap 80\% of the test data and repeat 50 times to compute CI. "T2D": type-II diabetes. "AFib": atrial fibrillation. "COPD": chronic obstructive pulmonary disease. "CKD": chronic kidney disease. "HTN": hypertensive heart disease. D2Impute is short for Denoise2Impute.}
\label{app:tab:d1_common_chronic_codes}
\end{table}

\begin{table}[ht!]
  \centering
    \begin{tabular}{ |c | c | c | c | c | c | c |c | c |}
    \hline
         &  Z37 & Z00 & Z12 & M54 & Z23 & Z01 & V22 & Z34 \\ \hline
         Ground Truth & 93.86 &  90.29 &  89.15 &  82.5 & 78.06 &  79.38 &  94.81 &  93.62 \\ 
    & $\pm$ 0.15 & $\pm$ 0.07  & $\pm$ 0.06 &  $\pm$ 0.12 & $\pm$ 0.09 &  $\pm$ 0.13 & $\pm$ 0.09 & $\pm$ 0.16 \\  \hline
         $\mathcal{D}_1$ & 80.95 & 81.70 & 79.81 & 82.00 & 72.08 & 72.28 & 75.16 & 86.17 \\  
         & $\pm$ 0.31 & $\pm$ 0.09 & $\pm$ 0.10 & $\pm$ 0.09 & $\pm$ 0.1 & $\pm$ 0.09 & $\pm$ 0.31 & $\pm$ 0.21 \\ \hline
         softImpute & 80.41 & 80.56 & 79.32 & 81.35 & 71.87 & 72.41 & 75.69 & 86.58 \\  
         & $\pm$ 0.33 & $\pm$ 0.10 & $\pm$ 0.11 & $\pm$ 0.10 & $\pm$ 0.14 & $\pm$ 0.11 & $\pm$ 0.32 & $\pm$ 0.26 \\ \hline
         k-NN & 80.84 & 81.53 & 79.95 & 82.06 & 72.22 & 72.43 & 75.84 & 86.46 \\  
         & $\pm$ 0.32 & $\pm$ 0.09 & $\pm$ 0.09 & $\pm$ 0.10 & $\pm$ 0.10 & $\pm$ 0.14 & $\pm$ 0.29 & $\pm$ 0.20 \\ \hline
      MLP & 81.41 & 81.71 & 80.86 & 82.44 & 72.86 & 72.82 & 76.27 & 87.0 \\  
      & $\pm$ 0.28 & $\pm$ 0.08 & $\pm$ 0.07 & $\pm$ 0.08 & $\pm$ 0.07 & $\pm$ 0.14 & $\pm$ 0.30 & $\pm$ 0.21 \\ \hline
      MLP-T & 81.67 & 82.06 & 81.02 & 82.51 & 72.98 & 73.05 & 76.83 & 87.54 \\  
      & $\pm$ 0.31 & $\pm$ 0.11 & $\pm$ 0.09 & $\pm$ 0.07 & $\pm$ 0.11 & $\pm$ 0.16 & $\pm$ 0.35 & $\pm$ 0.24 \\ \hline
      DAE &  81.36 & 81.83 & 80.79 & 82.25  &  72.79 & 72.73 & 76.35 &  87.12 \\
          &  $\pm$ 0.31 & $\pm$ 0.08 & $\pm$ 0.09 & $\pm$ 0.08 &  $\pm$ 0.08 & $\pm$ 0.14 & $\pm$ 0.32 & $\pm$ 0.25 \\ \hline
          CDAE & 81.32 & 81.89 & 80.81 & 82.31 & 72.75 & 72.86 & 76.39 &  87.08 \\
        & $\pm$ 0.32 & $\pm$ 0.09 & $\pm$ 0.10 & $\pm$ 0.09 & $\pm$ 0.11 & $\pm$ 0.12 & $\pm$ 0.34 & $\pm$ 0.24 \\ \hline
      D2Impute & 81.42 & 82.54 &  81.27 &  82.4  & 73.17 & 73.37  &  77.36  & 87.89 \\  
      & $\pm$ 0.30 & $\pm$ 0.07 & $\pm$ 0.08 & $\pm$ 0.09 & $\pm$ 0.14 & $\pm$ 0.10 & $\pm$ 0.28 & $\pm$ 0.23 \\ \hline
      D2Impute-T &   {\bf 82.46}  &  {\bf 82.75}  &  {\bf 81.56}  &  {\bf 82.75}  &  {\bf 73.54}  &  {\bf 73.52}  &  {\bf 78.48}  &  {\bf 88.47} \\  
      & $\pm$ 0.30 & $\pm$ 0.07 & $\pm$ 0.08 & $\pm$ 0.09 & $\pm$ 0.07 & $\pm$ 0.09 & $\pm$ 0.38 & $\pm$ 0.18 \\ \hline
    \end{tabular}
    \caption{Results for eight randomly chose ICD code predictions, where each ICD code has a prevalence larger than 6\%. 
    Mean AUPRC and $95\%$ confidence intervals (CI) are provided. We bootstrap 80\% of the test data and repeat 50 times to compute CI. D2Impute is short for Denoise2Impute.}
    \label{app:tab:d1_eight_random_codes}
\end{table}

\newpage
\subsection{Benchmarking ICD Code Prediction Tasks using $\mathcal{D}_2$}
To investigate how the performance on the ICD code prediction task varies as a function of which noisy dataset we use, we present results for imputation methods where $\mathcal{D}_2$ is used as the noisy dataset instead of $\mathcal{D}_1$.
The corresponding AUPRCs are reported in Tables~\ref{app:tab:d2_common_chronic_codes}~\&~\ref{app:tab:d2_eight_random_codes}. 
Compared with Tables~\ref{app:tab:d1_common_chronic_codes}~\&~\ref{app:tab:d1_eight_random_codes}, we observe that the absolute value of AUPRC increased across most ICD-10 codes and methods suggesting that $\mathcal{D}_2$ contains more information that is predictive of $\mathcal{D}$, as compared to $\mathcal{D}_1$. 
\begin{table}[ht!]
  \centering
  \begin{tabular}{ |c | c | c | c | c | c | c | c |}
    \hline
         &  T2D & AFib &  Asthma & COPD & CKD & HTN & Osteoarthritis \\ \hline
         Ground Truth & 90.77 & 58.40 & 73.63 & 71.59 & 80.03 & 44.77 & 48.76 \\ 
        & $\pm$ 0.14 & $\pm$ 0.50 & $\pm$ 0.29 & $\pm$ 0.46 & $\pm$ 0.42 & $\pm$ 0.54 & $\pm$ 0.32  \\ \hline
        $\mathcal{D}_2$ & 88.71 & 55.38 & 71.77 & 67.24 & 76.02 & 41.87 & 45.07  \\  
        & $\pm$ 0.07 & $\pm$ 0.32 & $\pm$ 0.18 & $\pm$ 0.38 & $\pm$ 0.27 & $\pm$ 0.68 & $\pm$ 0.23 \\ \hline
        softImpute & 86.98 & 54.62 & 70.06 & 66.13 & 74.06 & 40.14 & 43.29  \\  
        & $\pm$ 0.14 & $\pm$ 0.39 & $\pm$ 0.21 & $\pm$ 0.41 & $\pm$ 0.29 & $\pm$ 0.41 & $\pm$ 0.24 \\ \hline
        k-NN & 88.16 & 54.95 & 71.21 & 66.88 & 75.17 & 40.83 & 44.26  \\  
        & $\pm$ 0.11 & $\pm$ 0.38 & $\pm$ 0.19 & $\pm$ 0.36 & $\pm$ 0.33 & $\pm$ 0.67 & $\pm$ 0.23 \\ \hline
        MLP & 88.82 & 55.88& 71.69 & 67.82& 76.11& 43.49 & 45.78  \\  
        & $\pm$ 0.10 & $\pm$ 0.39 & $\pm$ 0.15 & $\pm$ 0.34 & $\pm$ 0.28 & $\pm$ 0.47 & $\pm$ 0.32 \\ \hline
        MLP-T & 89.04 & 56.31 & 71.95 & 68.14 & 76.53 & 43.72 & 46.11  \\  
        & $\pm$ 0.15 & $\pm$ 0.41 & $\pm$ 0.19 & $\pm$ 0.37 & $\pm$ 0.32 & $\pm$ 0.51 & $\pm$ 0.34 \\ \hline
        DAE & 88.79 & 55.97 & 71.76 & 68.06 & 76.24 & 43.38 & 45.92  \\  
        & $\pm$ 0.12 & $\pm$ 0.41 & $\pm$ 0.13 & $\pm$ 0.36 & $\pm$ 0.29 & $\pm$ 0.45 & $\pm$ 0.35 \\ \hline
        CDAE & 88.87 & 56.01 & 71.72 & 68.24 & 76.35 & 43.41 & 46.03  \\  
        & $\pm$ 0.14 & $\pm$ 0.45 & $\pm$ 0.14 & $\pm$ 0.33 & $\pm$ 0.31 & $\pm$ 0.52 & $\pm$ 0.35 \\ \hline
        D2Impute & {\bf 89.36} & 57.32 & 72.37 &  69.75 & {\bf 78.11} & {\bf 44.51} & {\bf 47.31} \\  
        & $\pm$ 0.13 & $\pm$ 0.36 & $\pm$ 0.15 & $\pm$ 0.22 & $\pm$ 0.25 & $\pm$ 0.53 & $\pm$ 0.33 \\ \hline
        D2Impute-T &  89.22 & {\bf 57.49} &  {\bf 72.45} &  {\bf 69.99} &  77.95 & 44.50 &  47.07 \\ 
        & $\pm$ 0.12 & $\pm$ 0.43 & $\pm$ 0.18 & $\pm$ 0.35 & $\pm$ 0.34 & $\pm$ 0.54 & $\pm$ 0.36 \\ \hline
    \end{tabular}
    \caption{Results for 7 ICD codes corresponding to common chronic conditions. Mean AUPRC and $95\%$ confidence intervals (CI) are provided. We bootstrap 80\% of the test data and repeat 50 times to compute CI. "T2D": type-II diabetes. "AFib": atrial fibrillation. "COPD": chronic obstructive pulmonary disease. "CKD": chronic kidney disease. "HTN": hypertensive heart disease. D2Impute is short for Denoise2Impute.}
\label{app:tab:d2_common_chronic_codes}
\end{table}

\begin{table}[ht!] 
  \centering
  \begin{tabular}{|c | c | c | c | c | c | c | c | c |}
          \hline
      &  Z37 & Z00 & Z12 & M54 & Z23 & Z01 & V22 & Z34 \\ \hline
      Ground Truth & 93.86 & 90.29 & 89.15 & 82.5 & 78.06 & 79.38 & 94.81 & 93.62 \\ 
    & $\pm$ 0.15 & $\pm$ 0.07  & $\pm$ 0.06 & $\pm$ 0.12 & $\pm$ 0.09 & $\pm$ 0.13 & $\pm$ 0.09 & $\pm$ 0.16 \\  \hline
    $\mathcal{D}_2$ & 91.98& 89.36& 87.11& 78.28& 75.92& 75.20 & 93.95 & 90.91  \\  
    & $\pm$ 0.14 & $\pm$ 0.06 & $\pm$ 0.07 & $\pm$ 0.09 & $\pm$ 0.10 & $\pm$ 0.11 & $\pm$ 0.08 & $\pm$ 0.15 \\ \hline
    softImpute & 89.74 & 87.25 & 86.33 & 76.44 & 73.18 & 74.46 & 90.16 & 88.17  \\  
    & $\pm$ 0.21 & $\pm$ 0.09 & $\pm$ 0.10 & $\pm$ 0.11 & $\pm$ 0.14 & $\pm$ 0.16 & $\pm$ 0.11 & $\pm$ 0.18 \\ \hline
    k-NN & 90.22 & 88.84 & 86.78 & 77.83 & 76.14 & 74.62 & 92.75 & 90.68  \\  
    & $\pm$ 0.16 & $\pm$ 0.07 & $\pm$ 0.08 & $\pm$ 0.10 & $\pm$ 0.13 & $\pm$ 0.12 & $\pm$ 0.09 & $\pm$ 0.16 \\ \hline
    MLP & 92.43 & 89.44 & 87.44 & 78.54 & 76.26 & 75.47 & 93.99& 91.10  \\  
    & $\pm$ 0.17 & $\pm$ 0.06 & $\pm$ 0.07 & $\pm$ 0.13 & $\pm$ 0.08 & $\pm$ 0.11 & $\pm$ 0.12 & $\pm$ 0.21 \\ \hline
    MLP-T & 92.78 & 89.49 & 87.76 & 78.95 & 76.41 & 75.84 & 94.11 & 91.75  \\  
    & $\pm$ 0.18 & $\pm$ 0.08 & $\pm$ 0.11 & $\pm$ 0.12 & $\pm$ 0.10 & $\pm$ 0.14 & $\pm$ 0.13 & $\pm$ 0.19 \\ \hline
    DAE & 92.54 & 89.32 & 87.49 & 78.65 & 76.21 & 75.56 & 93.87 & 91.34  \\  
    & $\pm$ 0.19 & $\pm$ 0.09 & $\pm$ 0.06 & $\pm$ 0.18 & $\pm$ 0.11 & $\pm$ 0.15 & $\pm$ 0.14 & $\pm$ 0.19 \\ \hline
    CDAE & 92.61 & 89.37 & 87.56 & 78.79 & 76.28 & 75.74 & 93.96 & 91.55  \\  
    & $\pm$ 0.16 & $\pm$ 0.07 & $\pm$ 0.09 & $\pm$ 0.16 & $\pm$ 0.14 & $\pm$ 0.17 & $\pm$ 0.13 & $\pm$ 0.21 \\ \hline
    D2Impute &  93.41 &  89.60 &  87.88 & 79.79 &  76.34 & 76.42 & {\bf 94.48} & {\bf 92.78} \\  
    & $\pm$ 0.17 & $\pm$ 0.05 & $\pm$ 0.06 & $\pm$ 0.11 & $\pm$ 0.10 & $\pm$ 0.12 & $\pm$ 0.13 & $\pm$ 0.14 \\ \hline
    D2Impute-T & {\bf 93.44} & {\bf 89.64} &  {\bf 87.93} &  {\bf 79.86}&  {\bf 76.61}&  {\bf 76.50}&  94.40 & 92.77  \\
    & $\pm$ 0.12 & $\pm$ 0.06  & $\pm$ 0.08 & $\pm$ 0.12 & $\pm$ 0.07 & $\pm$ 0.14 & $\pm$ 0.12 & $\pm$ 0.15 \\  \hline
    \end{tabular}
    \caption{Results for eight randomly chose ICD code predictions, where each ICD code has a prevalence larger than 6\%. 
    Mean AUPRC and $95\%$ confidence intervals (CI) are provided. We bootstrap 80\% of the test data and repeat 50 times to compute CI. D2Impute is short for Denoise2Impute.}
    \label{app:tab:d2_eight_random_codes}
  \end{table}

\newpage
\subsection{Benchmarking Denoising on Hospital Readmission Prediction}
\label{app:hospital}
We also benchmark the denoising models on a popular healthcare prediction task from the literature, predicting hospital readmission from ICD-10 codes~\citep{kansagara2011risk, huang2019clinicalbert, mahmoudi2020use, jiang2023health}. 
While ICD-10 codes can be used to predict hospital readmission, the fact that readmission prediction is not an ICD-10 code itself enhances its credibility as a benchmarking dataset since we no longer need to set a dimension to all 0s before performing denoising.
The size of the training dataset is 1,124,550, which is used to train a LGBM model, and the size of the testing dataset is 545,797, which is used to evaluate the performance the LGBM model. 
We observe that our proposed methods, Denoise2Impute and Denoise2Impute-T, statistically significantly outperform existing imputation methods (Table~\ref{tab4.2}).
\begin{table}[ht!]
  \centering
  \begin{tabular}{ |c |c | c| c | c | c | c | c |}
   \hline
    Methods & $\mathcal{D}_1$ & softImpute & MLP & DAE & CDAE & D2Impute & D2Impute-T \\  \hline
    Mean &  20.74 &  18.58 &  20.81  & 20.87 & 20.89 & 21.47 & {\bf 21.51} \\
    CI & $\pm$ 0.15 & $\pm$ 0.16 &$\pm$ 0.13 & $\pm$ 0.12 & $\pm$ 0.14 & $\pm$ 0.10 & $\pm$ 0.10 \\ \hline
    \end{tabular}
    \caption{Hospital readmission prediction on $\mathcal{D}_1$ dataset. Mean AUPRC and $95\%$ confidence intervals (CI). We bootstrap 80\% of the test data and repeat 50 times to compute CI. D2Impute is short for Denoise2Impute.}
    \label{tab4.2}
  \end{table}

  \newpage
  \subsection{Benchmarking ICD Code Predictions using Synthetic $\mathcal{D}_1$}\label{app:synthetic}
  To investigate how the noise distribution affects the performance of the models, we created an artificial noisy dataset, Noisy $\mathcal{D}_1$, from $\mathcal{D}$, where the noise was distributed such that the prevalence of Noisy $\mathcal{D}_1$ matches that of $\mathcal{D}_1$. Specifically, a binary mask was randomly sampled per patient and per ICD code and was multiplied with $\mathcal{D}$.
  Results are demonstrated in Figure~{\ref{fig:synthetic_example}}.
  \begin{figure}[ht!]
    \begin{center}
      \includegraphics[width=\textwidth]{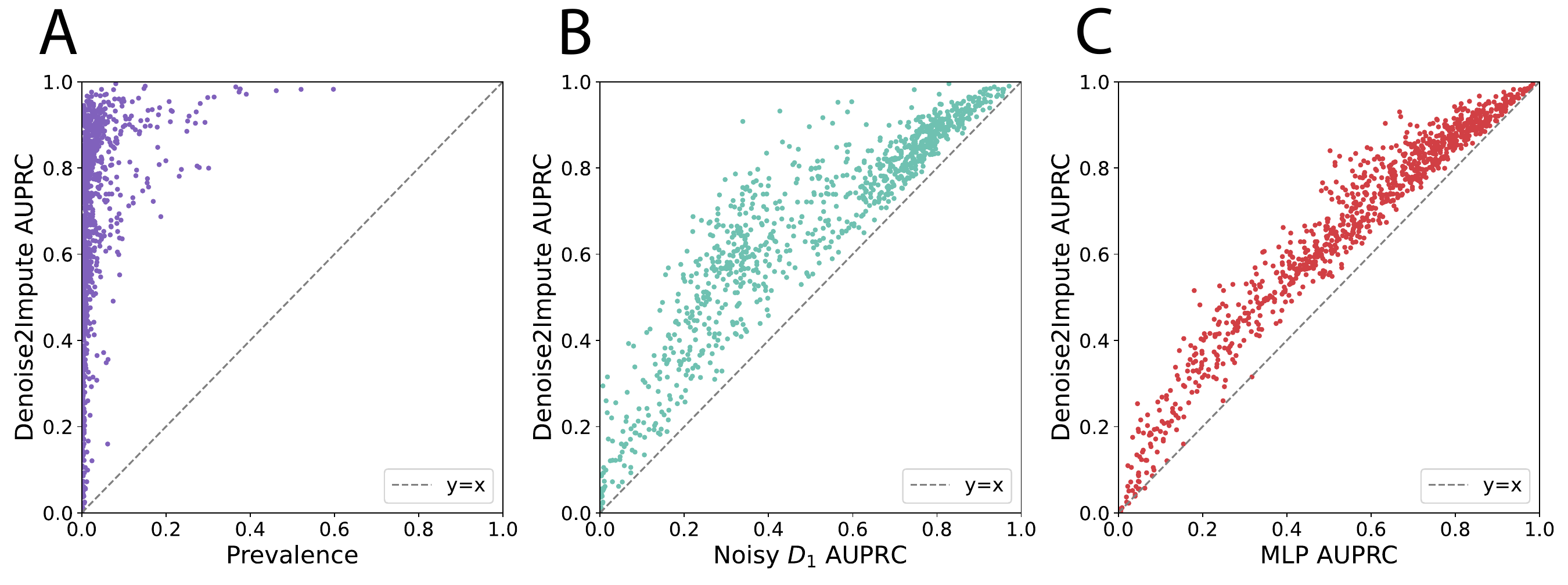}
      \caption{Results on ICD code prediction where the denoising models are trained using Noisy $\mathcal{D}_1$.
      We compare the per ICD code AUPRC of ImputEHR with those of A) imputing 0's with prevalence B) using  Noisy $\mathcal{D}_1$ and C) denoising using an MLP.
      }
      \label{fig:synthetic_example}
     \end{center}
  \end{figure}

\end{document}